\documentclass[final]{l4dc2026}

\usepackage{mathtools}
\usepackage{derivative}
\usepackage{xcolor}
\usepackage{bm}
\usepackage[font=small, labelfont=bf]{caption}
\usepackage{enumitem}

\usepackage[export]{adjustbox} 
\usepackage{tikzducks}
\usepackage{wrapfig}
\usepackage{multirow}  

\usepackage{xurl}
\usepackage{hyperref} 
\hypersetup{
    colorlinks=true,
    linkcolor=black,
    urlcolor=magenta,
}

\newtheorem*{lemma*}{Lemma} 
\newtheorem*{theorem*}{Theorem} 

\newcommand{\R}{\mathbb{R}}

\newcommand{\mc}[1]{\mathcal{#1}}
\newcommand{\T}{^\top}

\newcommand{\rank}{\operatorname{rank}}

\newcommand{\norm}[1]{\left\lVert#1\right\rVert}
\DeclareMathOperator{\diag}{diag}

\newcommand{\bzero}{\mathbf{0}}

\newcommand{\bSigma}{\mathbf{\Sigma}}
\newcommand{\bff}{\mathbf{f}}
\newcommand{\bg}{\mathbf{g}}

\newcommand{\bk}{\mathbf{k}}

\newcommand{\bo}{\mathbf{o}}

\newcommand{\bq}{\mathbf{q}}

\newcommand{\bu}{\mathbf{u}}

\newcommand{\bx}{\mathbf{x}}

\newcommand{\bz}{\mathbf{z}}

\newcommand{\bA}{\mathbf{A}}
\newcommand{\bB}{\mathbf{B}}

\newcommand{\bD}{\mathbf{D}}
\newcommand{\bE}{\mathbf{E}}

\newcommand{\bH}{\mathbf{H}}
\newcommand{\bI}{\mathbf{I}}
\newcommand{\bJ}{\mathbf{J}}

\newcommand{\bM}{\mathbf{M}}

\newcommand{\bP}{\mathbf{P}}
\newcommand{\bQ}{\mathbf{Q}}

\newcommand{\bU}{\mathbf{U}}
\newcommand{\bV}{\mathbf{V}}

\newcommand{\bpi}{\bm{\pi}}
\newcommand{\bpsi}{\bm{\psi}}

\newcommand{\bphi}{\bm{\phi}}

\newcommand{\bOmega}{\bm{\Omega}}

\newcommand{\M}{\mathcal{M}}

\newcommand{\flow}{\bm{\varphi}}

\title[HALO]{HALO: Hybrid Auto-encoded Locomotion with Learned Latent Dynamics, Poincar\'e Maps, and Regions of Attraction}

\usepackage{times}

\author{%
 \Name{Blake Werner*} \Email{bwerner@caltech.edu}\\
 \Name{Sergio Esteban*} \Email{sesteban@caltech.edu}\\
 \Name{Massimiliano de\;Sa} \Email{mdesa@caltech.edu}\\
 \addr Caltech, Pasadena, CA, USA
 \AND
 \Name{Max H. Cohen} \Email{mhcohen2@ncsu.edu}\\
 \addr NC State University, Raleigh, NC, USA
 \AND
 \Name{Aaron D. Ames} \Email{ames@caltech.edu}\\
 \addr Caltech, Pasadena, CA, USA
}

\begin{document}

\maketitle

\begin{abstract}%
Reduced-order models are powerful for analyzing and controlling high-dimensional dynamical systems. Yet constructing these models for complex hybrid systems such as legged robots remains challenging. Classical approaches rely on hand-designed template models (e.g., LIP, SLIP), which, though insightful, only approximate the underlying dynamics. In contrast, data-driven methods can extract more accurate low-dimensional representations, but it remains unclear when stability and safety properties observed in the latent space meaningfully transfer back to the full-order system.
To bridge this gap, we introduce HALO (Hybrid Auto-encoded Locomotion), a framework for learning latent reduced-order models of periodic hybrid dynamics directly from trajectory data. HALO employs an autoencoder to identify a low-dimensional latent state together with a learned latent Poincar\'e map that captures step-to-step locomotion dynamics. This enables Lyapunov analysis and the construction of an associated region of attraction in the latent space, both of which can be lifted back to the full-order state space through the decoder. Experiments on a simulated hopping robot and full-body humanoid locomotion demonstrate that HALO yields low-dimensional models that retain meaningful stability structure and predict full-order region-of-attraction boundaries.
\end{abstract}

\begin{keywords}%
  Autoencoders, Hybrid Systems, Reduced-Order Models.
\end{keywords}

\section{Introduction}
Reduced-order model (ROM) techniques aim to approximate high-dimensional dynamical systems with lower-dimensional representations. Such reductions enable more efficient verification, control synthesis, and planning for systems whose full-order model (FOM) dynamics may be too complex to analyze directly. Classical approaches to model reduction in control include balanced truncation and related projection-based methods \cite{antoulas2005approximation,wilcox2015survey}, moment-matching for linear and nonlinear systems \cite{astolfi2010moment}, and methods based on invariant manifolds \cite{haller2016ssm}. These approaches share conceptual ties with notions of abstraction in hybrid and symbolic control \cite{tabuada2009hybrid,girard2007simulation}, where simpler surrogate systems are constructed to preserve relevant dynamical properties of the original one.

While these methods provide rigorous guarantees for structured or weakly nonlinear systems, extending them to high-dimensional robotic systems, particularly those with contact-rich dynamics, remains challenging. Motivated by these limitations, recent works
learn ROMs from data.
Approaches based on dynamic mode decomposition, PCA, and proper orthogonal decomposition \cite{brunton2022book} identify dominant low-dimensional structure in trajectories, while autoencoders learn nonlinear embeddings that map between full-order states and latent coordinates. A central challenge, however, is determining when properties such as stability, safety, or invariance in the latent space faithfully transfer back to the original full-order dynamics. Recent work has begun to establish sufficient conditions and approximation guarantees in this direction \cite{lutkus2025latent,bajcsy2025latent}, though practical instantiations for hybrid locomotion remain limited.

ROMs play a particularly influential role in legged robotics, where simple models such as the linear inverted pendulum (LIP) \cite{kajita20013d} and the spring-loaded inverted pendulum (SLIP) \cite{blickhan1989spring} capture the salient dynamics of center-of-mass motion \cite{raibert1986legged,wensing2017templates}. These templates are effective for control design and gait stabilization, but are inherently approximate: they do not directly arise from a systematic reduction of the FOM, and their accuracy can degrade outside of nominal operating regimes. Thus, recent work has learned reduced-order locomotion models from data \cite{Castillo2024DataDriven,Chen2024Reinforcement,Chen2024Beyond,li2024fld,Starke2022DeepPhase}, though questions remain regarding how to analyze their stability properties. 

\paragraph{Overview and Contributions:}
In this paper, we propose HALO (Hybrid Autoencoded LOcomotion), a framework for constructing and analyzing ROMs of hybrid robotic locomotion using autoencoder-based latent representations. Our approach represents the high-dimensional hybrid dynamics as a discrete-time system via a Poincar\'e return map \cite{westervelt2018feedback}. We then learn an autoencoder that embeds this return map into a low-dimensional latent space, yielding a latent Poincar\'e map that captures the step-to-step evolution of the locomotion gait cycle.

We first motivate this reduction in the linear setting, then extend to the nonlinear case by leveraging the existence of low-dimensional invariant surfaces associated with periodic orbits. This perspective provides practical guidance for selecting the latent dimension and clarifies when the autoencoder can represent the orbit geometry with negligible reconstruction error. Once the latent dynamics are identified, we perform Lyapunov stability analysis directly in the latent space and subsequently lift the corresponding region of attraction to the full-order state space via the decoder.

We evaluate HALO 
across a suite of hybrid systems.
Across all systems, we show that stability properties inferred from the latent Poincar\'e dynamics predict stability in the full-order system, even when it is controlled via reinforcement learning policies.
\section{Preliminaries}
%
\paragraph{Notation:} The Euclidean and Frobenius norms are denoted by $\norm{\cdot}$ and $\norm{\cdot}_F$, respectively. The restriction of a function $f: A \to B$ to a subset $U \subseteq A$ is denoted $f|_{U}: U \to B$. The inclusion map of a subset $U \subseteq A$ into $A$, which takes each $u \in U$ to $u \in A$, is denoted $\iota_U: U \hookrightarrow A$.

\paragraph{Model Reduction \& Autoencoders:} 
High-dimensional dynamical systems often have an underlying low-dimensional structure. We formalize extracting low-dimensional structure from dynamical systems as follows. Consider a discrete-time dynamical system: 
\begin{align}
    \bx_{k+1} = \bff(\bx_k),\quad \bx_k \in \R^{n_x},\quad\bff: \R^{n_x} \to \R^{n_x}.
\end{align}
We refer to $\bx_k \in \R^{n_x}$ as the \textit{full-order state}, and $\bff$ as the FOM. To obtain a ROM, we must \textit{compress} $\bx$ to a lower-dimensional space, and $\bff$ to a lower-dimensional model. Additionally, we must have a way to \textit{decompress} from this reduced state back to $\bx$, motivating the use of \textit{autoencoders}.

The process of compression and decompression is performed by \textit{encoder} and \textit{decoder} maps. First, we define $n_z < n_x$ to be the dimension of our ROM. An encoder  $\bE: \R^{n_x} \to \R^{n_z}$ performs compression, and a decoder $\bD: \R^{n_z} \to \R^{n_x}$ performs decompression. Since $\R^{n_z}$ represents the hidden low-dimensional structure, we refer to $\R^{n_z}$ as the \textit{latent space}, and $n_z$ as the \textit{latent dimension}. We say an encoder-decoder pair $(\bE, \bD)$ is \textit{perfect} if $\bD \circ \bE(\bx) = \bx$ for all $\bx \in \R^{n_x}$, i.e, no information is lost from compression to decompression. In this spirit, to find a low-dimensional representation of $\bff$, we seek a map $\bg: \R^{n_z} \to \R^{n_z}$ that captures the important features of $\bff$. We refer to the map $\bg: \R^{n_z} \to \R^{n_z}$ as the \textit{reduced-order model}. This model determines a recurrence:
\begin{align}
    \bz_{k+1} &= \bg(\bz_k), \; \bz_k \in \R^{n_z},
\end{align}
in the latent space. In a perfect world, one would seek a perfect encoder-decoder pair $(\bE, \bD)$ for which $\bff(\bx) = \bD \circ \bg \circ \bE(\bx)$, for all $\bx \in \R^{n_x}$. Such an encoder-dynamics pair would perfectly compress the original high-order system down to a low-dimensional latent space. 

For complex, real-world systems, perfect compression in this sense is unrealistic. Likewise, one cannot analytically determine optimal encoders, decoders, and ROMs. To find a ``good'' encoder-decoder and ROM, one practically represents each by a neural network $\bE_{\boldsymbol{\phi}}$, $\bD_{\boldsymbol{\psi}}$, with parameters $\bphi$ and $\bpsi$, respectively, and encodes the goals above in appropriately designed \textit{loss functions}. 

\paragraph{From Discrete to Continuous with Hybrid Systems:} 
Above, we discussed the construction of autoencoder models for discrete-time dynamical systems. In robotics, however, there are a wide class of systems that do not fall naturally into this category. Robotic systems, such as legged robots, are
\textit{hybrid systems} \cite{westervelt2018feedback} that exhibit both discrete \textit{and} continuous-time behavior:
\begin{align}
\mathcal{H}=
    \begin{cases}
        \dot \bx = \mathbf{f}_c(\bx), & \bx \in \R^{n_x} \setminus \mathcal S\\
        \bx^+ = \mathbf{\Delta}(\bx^-), & \bx^- \in \mathcal S,
    \end{cases}
\end{align}
where $\mathcal S \subseteq \R^{n_x}$ is a set that triggers a discrete change in behavior. For legged robots, this change is triggered by a foot strike with the ground. Under certain conditions, one may construct a map $\bff:\mathcal{S}\rightarrow\mathcal{S}$, termed the \textit{Poincar\'e map} that maps a state $\bx \in \mathcal S$ to the state $\bff(\bx)\in\mathcal{S}$ at which the system next intersects $\mathcal S$; for a legged robot, $\bff$ maps from the state at one foot strike to the state at the next foot strike. Thus, the Poincar\'e map induces a discrete-time system $\bx_{k+1} = \bff(\bx_k)$ capturing the 
step-to-step dynamics.
Studying the stability of the Poincar\'e map affords insights into the behavior of the hybrid system. Details on hybrid systems and Poincar\'e maps are provided in Appendix~\ref{sec:fom_hybrid_dynamics}.
\begin{figure}
    \centering
    \includegraphics[width=0.8\linewidth]{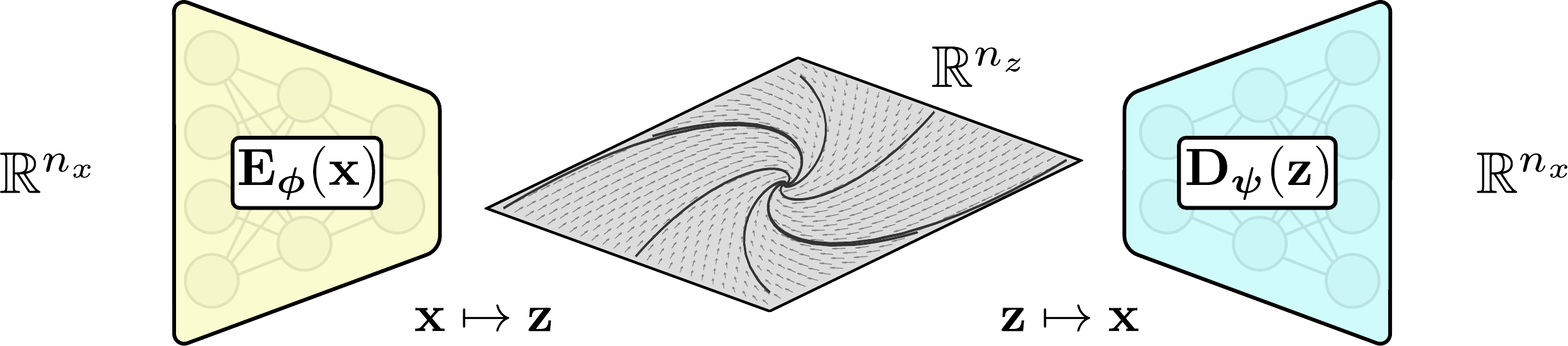}
    \vspace{-5pt}
    \caption{Autoencoders enable learning of a reduced-order dynamics model in a latent space.}
    \label{fig:architecture}
    \vspace{-10pt}
\end{figure}


\section{Insights from Linear and Nonlinear Systems}\label{sec:insights}
In the previous section, we discussed autoencoders, ROMs, and Poincar\'e maps. Autoencoders provide a means to compute ROMs from complex, high-dimensional data; ROMs enable analyzing complex systems using simple models; Poincar\'e maps yield insight into the stability of hybrid systems from simpler discrete-time models. 
We now bring these threads together to study the behavior of hybrid systems. We use autoencoders to learn ROMs of Poincar\'e maps and analyze their stability in the latent space to infer stability properties of the full hybrid system.
We begin by studying simple yet insightful motivation from linear and nonlinear dynamics, which provide templates for our learning-based approach. Then, we outline an autoencoder method for learning ROMs of Poincar\'e maps. 
We defer all proofs in this section to Appendix \ref{appendix:proofs}.

\paragraph{Motivation from Linear Systems:}
To motivate the nonlinear problem, we first perform a simple study for linear systems. Consider a full-order model governed by discrete-time LTI dynamics $\bx_{k+1} = \bA \bx_k$, where $\bx_k \in \R^{n_x}$. Our goal is to reduce the system to a lower-dimensional linear space. To this end, we fix a \textit{reduced dimension} $n_z$, smaller than $\rank(\bA)$. To perform dimensionality reduction, we define three matrices: the \textit{encoder matrix} $\bE \in \R^{n_z \times n_x}$, the \textit{decoder matrix} $\bD \in \R^{n_x \times n_z}$, and the \textit{latent dynamics matrix} $\bQ \in \R^{n_z \times n_z}$. How do we pick $\bD, \bQ, \bE$ such that trajectories of the autoencoder model, $\hat \bx_k = \bD \bQ^k \bE \bx_0$, resemble those of the full-order model? First, we consider the problem of matching \textit{one step} of the dynamics. Using the Eckart-Young theorem \cite{calafiore2014optimization}, we solve for the $\bE, \bD, \bQ$ minimizing the worst-case one-step error.
\begin{lemma}[One-Step Optimal Matching]\label{lem:one-step}
    Let $\bA \in \R^{n_{x}\times n_x}$ and $n_z \in (0, \rank(\bA)]_{\mathbb Z}$. Consider:
    \begin{align}\label{eq:lemma1}
        \inf_{\bE \in \R^{n_z \times n_x}, \bD \in \R^{n_x \times n_z}, \bQ \in \R^{n_z \times n_z}} \sup_{\bx_0 \in \R^{n_x}, \norm{\bx_0} = 1}\norm{\bA \bx_0 - \bD \bQ \bE \bx_0}^2.
    \end{align}
     If $\bA = \bU \bSigma \bV^\top$ is an SVD of $\bA$, then $\bD = \bU_{n_z}$, $\bQ = \bSigma_{n_z}$, $\bE = \bV_{n_z}^\top$, where $\bU_{n_z}, \bV_{n_z}$ are the first $n_z$ columns of $\bU, \bV$ and $\bSigma_{n_z} = \diag(\sigma_1, ..., \sigma_{n_z})$, are optimal solutions to \eqref{eq:lemma1}.
\end{lemma}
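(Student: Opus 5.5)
The plan is to recast the inner supremum as a squared operator norm and then apply the Eckart--Young theorem in its spectral-norm form. For fixed $\bE,\bD,\bQ$, the inner supremum in \eqref{eq:lemma1} is exactly $\norm{\bA - \bD\bQ\bE}_2^2$, the squared largest singular value of $\bA - \bD\bQ\bE$. This follows directly from the definition of the induced $2$-norm as the supremum of $\norm{\bM\bx_0}$ over unit vectors $\bx_0$. The problem is therefore equivalent to
\[
\inf_{\bE,\bD,\bQ}\norm{\bA - \bD\bQ\bE}_2^2 .
\]

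\textbf{Lower bound.} Any product $\bD\bQ\bE$, with $\bD \in \R^{n_x\times n_z}$, $\bQ\in\R^{n_z\times n_z}$ and $\bE\in\R^{n_z\times n_x}$, has rank at most $n_z$, since it factors through $\R^{n_z}$. The feasible set of products is thus contained in $\{\bM : \rank(\bM)\le n_z\}$. Conversely, every matrix of rank at most $n_z$ admits such a factorization, for instance via a thin SVD padded with zeros, so the two sets coincide; only the containment is needed for the bound. The Eckart--Young theorem (spectral-norm version) gives
\[
\min_{\rank(\bM)\le n_z}\norm{\bA-\bM}_2 = \sigma_{n_z+1},
\]
with $\sigma_{n_z+1}:=0$ if $n_z = \rank(\bA)$ or $n_z = n_x$. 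Hence the infimum in \eqref{eq:lemma1} is bounded below by $\sigma_{n_z+1}^2$.

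\textbf{Achievability.} Take $\bD=\bU_{n_z}$, $\bQ=\bSigma_{n_z}$ and $\bE=\bV_{n_z}\T$. Then $\bD\bQ\bE=\sum_{i\le n_z}\sigma_i \bu_i\bv_i\T$ is the truncated SVD, and
\[
\bA-\bD\bQ\bE=\sum_{i>n_z}\sigma_i\bu_i\bv_i\T .
\]
This is itself an SVD, with singular values $\sigma_{n_z+1}\ge\sigma_{n_z+2}\ge\cdots$, so its spectral norm is $\sigma_{n_z+1}$. The candidate therefore attains the lower bound, the infimum is a minimum, and the stated triple is optimal.

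The only step requiring care is invoking Eckart--Young in the spectral norm rather than the Frobenius norm. If a self-contained argument is preferred, I would prove the lower bound directly. For any $\bM$ of rank at most $n_z$, $\ker\bM$ has dimension at least $n_x-n_z$, so it meets the $(n_z+1)$-dimensional span of $\bv_1,\dots,\bv_{n_z+1}$ nontrivially. A unit vector $\bx_0$ in this intersection satisfies
\[
\norm{(\bA-\bM)\bx_0}=\norm{\bA\bx_0}\ge\sigma_{n_z+1}.
\]
This dimension-counting argument is the main obstacle, and it is routine.
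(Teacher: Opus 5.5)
Your proposal is correct and follows essentially the same route as the paper. The paper also identifies the inner supremum with the squared induced $2$-norm, lower-bounds the problem by the rank-constrained one using $\rank(\bD\bQ\bE)\le n_z$, and invokes Eckart--Young to show that the truncated-SVD factorization is feasible and attains the bound. Your explicit computation of the residual norm $\sigma_{n_z+1}$ and your dimension-counting proof of the spectral-norm Eckart--Young bound are correct details that the paper handles by citation.
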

Now, we consider the multi-step version of Lemma \ref{lem:one-step}. We seek a problem with a simple, closed-form solution that provides insight into the more general autoencoder problem. Inspired by the $\mathcal H_2$ model reduction problem \cite{dullerud2013course}, we consider the optimization problem:
\begin{align}\label{eq:model-red-prob}
    \inf_{\bE \in \R^{n_z \times n_x}, \bD \in \R^{n_x \times n_z}, \bQ \in \R^{n_z \times n_z}} \sum_{k = 0}^\infty \norm{\bA^k - \bD \bQ^k \bE}_F^2.
\end{align}
Though this problem is not generally solvable in closed-form, there are certain special cases for which closed-form solutions exist and are informative.
\begin{lemma}[Diagonal Model Reduction]\label{lem:diag-model-red}
    Suppose $\bA = \diag(\lambda_1, ..., \lambda_{n_x})$, with $|\lambda_1| \geq ... \geq |\lambda_{n_x}|$. Let $n_z \in (0, \rank(\bA)]_{\mathbb Z}$. If the unstable and center subspaces of $\bA$ together have dimension $\leq n_z$, then \eqref{eq:model-red-prob} is feasible and an optimal solution is given by the $n_z$-truncation:
    \begin{align}
        \hat \bD = \begin{bmatrix}
            I_{n_z}\\
            0_{(n_x-n_z) \times n_z}
        \end{bmatrix}, \quad \hat \bQ = \diag(\lambda_1, ..., \lambda_{n_z}), \quad \hat \bE = \begin{bmatrix}
            I_{n_z} & 0_{n_z \times (n_x-n_z)}
        \end{bmatrix}.
    \end{align}
\end{lemma}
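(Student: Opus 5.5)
The plan is to prove matching upper and lower bounds on the objective in \eqref{eq:model-red-prob}. The upper bound comes from evaluating the $n_z$-truncation $(\hat\bD,\hat\bQ,\hat\bE)$. The lower bound comes from applying the Eckart--Young theorem separately to each term $k$ of the sum.

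\emph{Step 1 (feasibility and value of the truncation).} Because $\hat\bD\hat\bQ^k\hat\bE = \diag(\lambda_1^k,\dots,\lambda_{n_z}^k,0,\dots,0)$, the $k$-th term of the objective equals
\[
\norm{\bA^k - \hat\bD\hat\bQ^k\hat\bE}_F^2=\sum_{i=n_z+1}^{n_x}|\lambda_i|^{2k}.
\]
The eigenvalues are ordered by modulus, and the unstable and center subspaces together have dimension at most $n_z$. Hence every index $i>n_z$ satisfies $|\lambda_i|<1$. Summing the geometric series over $k\ge 0$, including $k=0$ where $\bA^0=\bI$, gives the finite value
\[
J^\star:=\sum_{i>n_z}\frac{1}{1-|\lambda_i|^2}.
\]
This shows that \eqref{eq:model-red-prob} is feasible with a finite value.

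\emph{Step 2 (termwise lower bound).} Take any $\bE,\bD,\bQ$ of the stated sizes. For every $k$, the matrix $\bD\bQ^k\bE$ has rank at most $n_z$. By Eckart--Young in the Frobenius norm,
\[
\norm{\bA^k-\bD\bQ^k\bE}_F^2\ \ge\ \sum_{i>n_z}\sigma_i(\bA^k)^2 .
\]
Since $\bA^k$ is diagonal, its singular values are $|\lambda_i|^k$. The map $t\mapsto t^k$ is monotone on $[0,\infty)$, so the ordering $|\lambda_1|^k\ge\dots\ge|\lambda_{n_x}|^k$ is preserved. Therefore $\sigma_i(\bA^k)=|\lambda_i|^k$ in this order, ties included, and the $k$-th term is at least $\sum_{i>n_z}|\lambda_i|^{2k}$.

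\emph{Step 3 (conclusion).} Summing the bound from Step 2 over $k\ge 0$ shows that every candidate has cost at least $J^\star$. Step 1 shows that the truncation attains $J^\star$, so the truncation is optimal.

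The only delicate point is Step 2. The bound is applied to each $k$ separately, so the common factorization $\bD\bQ^k\bE$ across different $k$ can only increase the cost and never helps. It therefore suffices that each term is bounded by the best rank-$n_z$ approximation of $\bA^k$. I do not expect any real obstacle beyond checking the Frobenius version of Eckart--Young and the preservation of the singular-value ordering.
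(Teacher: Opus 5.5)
Your proof is correct and follows the same route as the paper. Both arguments take a termwise rank-$n_z$ lower bound on $\norm{\bA^k-\bD\bQ^k\bE}_F^2$ for each $k$, note that the truncation attains it simultaneously for all $k$, and sum over $k$ using $|\lambda_i|<1$ for $i>n_z$.

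The one difference is in how the termwise bound is justified. You get it directly from Frobenius Eckart--Young via $\sigma_i(\bA^k)=|\lambda_i|^k$. The paper instead restricts to diagonal $\bM$, arguing that off-diagonal entries only add error. Your version is more rigorous on this point, because zeroing the off-diagonal entries of a rank-$\le n_z$ matrix can increase its rank.
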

Lemma \ref{lem:diag-model-red} suggests a heuristic for viewing the nonlinear dimensionality reduction problem: if a latent dimension is capable of capturing complex, unstable system behavior, and all remaining modes are stable, then the latent dimension is capable of producing a ``good" reduced-order model. In what follows, we further develop this perspective in the nonlinear setting.

\paragraph{Motivation from Nonlinear Systems:}
To begin our nonlinear analysis, recall the \textit{manifold hypothesis}, which posits that data points in real-world, high-dimensional data sets are often clustered around low-dimensional \textit{submanifolds} \cite{fefferman2016testing}. Now, we apply this hypothesis, which underlies various dimensionality reduction methods, to our model reduction problem.

\begin{figure}
    \centering
    \includegraphics[width=0.8\linewidth]{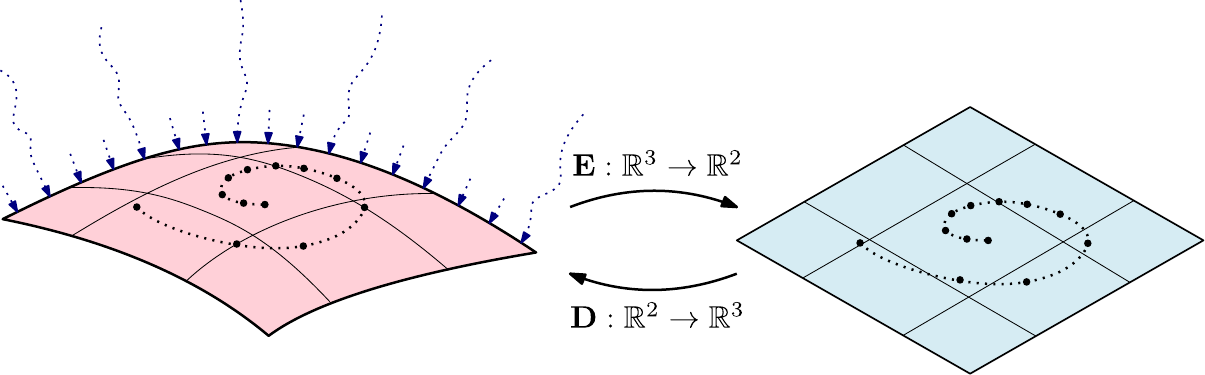}
    \vspace{-5pt}
    \caption{An \textit{attractive invariant manifold} for a discrete-time system (in red) can have a lower dimensional embedding (in blue). This embedding informs a choice of latent dimension and reduced-order model.}
    \label{fig:attr-inv-manif}
    \vspace{-15pt}
\end{figure}

Suppose we collect trajectory data $\mathfrak{D} = \{\{\bx_k\}_{k = 0}^K: \bx_0 \in \mathcal X \subseteq \R^n, \; \bx_{k+1} = \bff(\bx_k)\}$ from a variety of initial conditions $\bx_0 \in \mathcal X \subseteq \R^n$. By the manifold hypothesis, suppose the trajectory data concentrates around a submanifold of $\R^n$. Since we sample $\bx_0$ from across the state space, trajectories clustering around a submanifold suggest the existence of an \textit{attractive invariant manifold} to which nearby trajectories converge. To gain insight into the nonlinear autoencoder problem, we thus present a simple study relating invariant manifolds and autoencoder reduced-order models.

First, employing the classic \textit{Whitney embedding theorem} \cite[Thm. 6.19]{lee2013smooth}, we establish a basic relationship between the dimensionality of an invariant manifold $\M$ (for which $\bff(\M) = \M$) and the dimension of latent space required to fully represent $\M$ and the dynamics on it.

\begin{lemma}[Model Reduction \& Invariant Manifolds]\label{lem:inv-manif}
    Let $\bff: \R^{n_x} \to \R^{n_x}$ be a map defining a discrete-time system with smooth invariant manifold $\M \subseteq \R^{n_x}$. For $\dim(\M) = k$:
    \begin{enumerate}
        \item \underline{Perfect encoder-decoder pair on $\M$}: there exists a smooth encoder $\bE: U \subseteq \R^{n_x} \to \R^{n_z}$ and a smooth decoder $\bD: W \subseteq \R^{n_z} \to \R^{n_x}$ satisfying $\bD(\bE(\bx)) = \bx$  for all $\bx \in \M$, where $n_z \leq 2k$ and $U, W$ are open neighborhoods of $\M$ and $\bE(\M)$ respectively, 
        \item \underline{Latent system}: there exists a map $\bg: \R^{n_z} \to \R^{n_z}$ satisfying $\bff(\bx) = \bD \circ \bg \circ \bE(\bx) \; \forall \bx \in \M$.
    \end{enumerate}
\end{lemma}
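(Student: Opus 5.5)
The plan is to take the encoder from a Whitney embedding of $\M$, extend it off $\M$ via a tubular neighborhood, and take the decoder from a tubular neighborhood of the embedded image; the latent map is then defined by conjugating $\bff$ through these two maps.

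\textbf{Step 1: the embedding.} By the Whitney embedding theorem \cite[Thm. 6.19]{lee2013smooth}, the $k$-dimensional smooth manifold $\M$ admits a smooth embedding $\bphi: \M \to \R^{n_z}$ with $n_z = 2k \leq 2k$. Since $\bphi$ is an embedding, its image $N := \bphi(\M)$ is an embedded submanifold of $\R^{n_z}$, and $\bphi: \M \to N$ is a diffeomorphism. Let $\bphi^{-1}: N \to \M$ denote the smooth inverse, and view it as a smooth map into $\R^{n_x}$ by composing with $\iota_\M$.

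\textbf{Step 2: extending encoder and decoder off the manifolds.} I would use the tubular neighborhood theorem twice.
\begin{enumerate}
\item \emph{Encoder.} Since $\M \subseteq \R^{n_x}$ is an embedded submanifold, it has a tubular neighborhood $U$ with a smooth retraction $r: U \to \M$. Set $\bE := \bphi \circ r$. This is smooth on the open set $U \supseteq \M$, and $\bE|_\M = \bphi$.
\item \emph{Decoder.} Likewise, $N$ has a tubular neighborhood $W \supseteq N = \bE(\M)$ with a smooth retraction $s: W \to N$. Set $\bD := \iota_\M \circ \bphi^{-1} \circ s$.
\end{enumerate}
Then for every $\bx \in \M$ we get $\bD(\bE(\bx)) = \bphi^{-1}(\bphi(\bx)) = \bx$, which proves item 1.

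\textbf{Step 3: the latent system.} Define $\bg$ on $N$ by $\bg := \bphi \circ \bff|_\M \circ \bphi^{-1}$. This is well defined because invariance gives $\bff(\M) = \M$. On the rest of $\R^{n_z}$ I would extend $\bg$ arbitrarily, for instance by $\bg := \bphi \circ \bff \circ \bD$ on $W$ and by zero elsewhere; the statement does not ask for smoothness of $\bg$. For $\bx \in \M$ we then have
\[
\bD(\bg(\bE(\bx))) = \bphi^{-1}\bigl(\bphi(\bff(\bx))\bigr) = \bff(\bx),
\]
where the first step uses $\bff(\bx) \in \M$, which keeps the argument of $\bD$ inside $N$. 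This proves item 2.

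\textbf{Main obstacle.} The delicate point is the existence of smooth retractions, which requires $\M$ and $N$ to be \emph{embedded} submanifolds. Tubular neighborhoods exist for any embedded submanifold of Euclidean space, though the tube need not have uniform width when the manifold is noncompact; I would state this assumption explicitly. If $\M$ were only immersed, I would restrict to a compact piece or appeal to the paper's standing smoothness assumption. Smoothness of $\bff$ is not needed for this argument.
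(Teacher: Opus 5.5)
Your proof is correct and has the same skeleton as the paper's: Whitney-embed $\M$, extend the embedding off $\M$ and its inverse off $\bE(\M)$, then define the latent map on $\bE(\M)$ by conjugation and extend it arbitrarily. The difference is the extension device.

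\begin{itemize}
\item \textbf{The paper's route.} It applies the extension lemma for smooth functions on embedded submanifolds \cite[Lem.~5.34]{lee2013smooth} to $\overline\bE$ and to $\iota_\M\circ\tilde\bD$. This avoids normal-bundle geometry. However, the extensions are arbitrary away from the manifolds. In particular, $\bE(U)$ need not lie in $W$, so $\bD\circ\bE$ is a priori defined only on $\bE^{-1}(W)$ and is uncontrolled off $\M$.
\item \textbf{Your route.} You precompose with smooth tubular retractions $r: U\to\M$ and $s: W\to\bE(\M)$. This costs the tubular neighborhood theorem but gives a coherent pair: $\bE(U)=\bE(\M)\subseteq W$, $\bD(W)\subseteq\M$, and $\bD\circ\bE=r$ on all of $U$. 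So the autoencoder is a smooth retraction onto the invariant manifold, and $\bE\circ\bD\circ\bE=\bE$ (the identity targeted by $L_{\mathrm{rec},z}$) holds on all of $U$, not just on $\M$.
\item \textbf{A further payoff.} With your extension $\bg=\bphi\circ\bff\circ\bD$ on $W$, one gets $\bD\circ\bg\circ\bE=\bff\circ r$ on $U$. The off-manifold residual penalized by $L_{\mathrm{bck}}$ is then exactly $\bff(\bx)-\bff(r(\bx))$. This makes explicit what the reduced model misses away from $\M$, and connects directly to the normal-convergence assumption discussed after Theorem~\ref{thm:transf-stb}.
\end{itemize}

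One caveat is shared by both proofs. The strong Whitney theorem into $\R^{2k}$ requires $k\geq 1$. For $k=0$, with $\M$ a periodic orbit of period at least two, every map into $\R^0$ is constant, so $\bD\circ\bE$ cannot be the identity on $\M$. That case must be excluded or handled with $n_z=1$.
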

\begin{remark}
    Further reductions in $n_z$ are possible assuming compactness of $\M$ and analogues of (1) and (2) on open subsets of $\M$ instead of on its entirety \cite{kvalheim2025autoencoding}.
\end{remark}
Using $\bE $ and $ \bD$ constructed in the proof of Lemma \ref{lem:inv-manif}, we make the simple observation that stability of the latent dynamics on $\bE(\M)$ implies stability of the original system on $\M$.
\begin{theorem}[Transfer of Stability on $\M$]\label{thm:transf-stb}
    Consider the setting of Lemma \ref{lem:inv-manif} and its proof. Let $\tilde \bg : \bE(\M) \to \bE(\M)$ be the restriction of $\bg$ to $\bE(\M)$ and $\tilde \bff: \M \to \M$ the restriction of $\bff$ to $\M$. Suppose $\bz^* \in \bE(\M)$ is a locally asymptotically stable fixed point of $\tilde \bg$. Then, $\bx^* = \bD(\bz^*)$ is a locally asymptotically stable fixed point of $\tilde \bff$. Further, if $\bz$ belongs to the region of attraction of $\bz^*$ for $\tilde \bg$, then $\bD(\bz)$ belongs to the region of attraction of $\bx^*$ for $\tilde \bff$.
\end{theorem}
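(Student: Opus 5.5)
The plan is to show that, restricted to $\M$, the encoder and decoder of Lemma~\ref{lem:inv-manif} conjugate $\tilde\bff$ and $\tilde\bg$ by a homeomorphism. Stability and regions of attraction are invariant under topological conjugacy, so both claims then follow directly.

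\textbf{Step 1: the restricted maps form a homeomorphism.} From the proof of Lemma~\ref{lem:inv-manif}, $\bE$ comes from a Whitney embedding of $\M$ into $\R^{n_z}$. Hence $\bE|_\M : \M \to \bE(\M)$ is a homeomorphism onto its image with the subspace topology. Its continuous inverse is $\bD|_{\bE(\M)}$, because $\bD(\bE(\bx)) = \bx$ for every $\bx \in \M$.

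\textbf{Step 2: the latent map conjugates $\tilde\bff$.} We have $\bff(\M)=\M$, and $\bff = \bD\circ\bg\circ\bE$ on $\M$. Applying $\bE$ gives $\bE\circ\bff = \bE\circ\bD\circ\bg\circ\bE$ on $\M$. To simplify, I need $\bg(\bE(\M)) \subseteq \bE(\M)$, which the statement implicitly assumes by writing $\tilde\bg:\bE(\M)\to\bE(\M)$. Concretely, in the proof of Lemma~\ref{lem:inv-manif} one takes $\bg = \bE\circ\bff\circ\bD$ on $\bE(\M)$. Since $\bE\circ\bD$ is the identity on $\bE(\M)$, we obtain
\begin{equation*}
\tilde\bg = \bE|_\M \circ \tilde\bff \circ \bD|_{\bE(\M)}, \qquad \tilde\bff = \bD|_{\bE(\M)}\circ \tilde\bg \circ \bE|_\M .
\end{equation*}
Iterating, $\tilde\bff^k(\bD(\bz)) = \bD(\tilde\bg^k(\bz))$ for all $k\ge 0$ and all $\bz\in\bE(\M)$.

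\textbf{Step 3: transfer of the fixed point, stability and attraction.}
\begin{enumerate}
\item Fixed point: $\tilde\bff(\bx^*) = \bD(\tilde\bg(\bz^*)) = \bD(\bz^*) = \bx^*$.
\item Lyapunov stability: given a neighborhood $V$ of $\bx^*$ in $\M$, the set $\bD^{-1}(V)\cap\bE(\M) = \bE(V)$ is a neighborhood of $\bz^*$ by continuity of $\bD$. Stability of $\tilde\bg$ gives a neighborhood $W$ of $\bz^*$ whose forward orbits stay in $\bE(V)$. Then $\bD(W)$ is a neighborhood of $\bx^*$, because $\bD|_{\bE(\M)}$ is a homeomorphism, and by the iterate identity its orbits stay in $V$.
\item Attraction: if $\tilde\bg^k(\bz)\to\bz^*$, continuity of $\bD$ gives $\tilde\bff^k(\bD(\bz)) = \bD(\tilde\bg^k(\bz)) \to \bD(\bz^*) = \bx^*$.
\end{enumerate}
The attraction argument simultaneously yields local attractivity, because $\bD$ maps a basin neighborhood of $\bz^*$ onto a neighborhood of $\bx^*$. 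It also gives the region-of-attraction inclusion in the theorem.

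\textbf{Main obstacle.} The main difficulty is Step 1, specifically making sure neighborhoods transfer in both directions. Continuity of $\bD$ is enough to push convergence forward. The stability $\epsilon$–$\delta$ argument, however, requires $\bE|_\M$ to be a topological embedding, meaning $\bD|_{\bE(\M)}$ is an open map onto $\M$. This fails for a mere injective immersion, so I would explicitly invoke the embedding (not immersion) version of Whitney's theorem used in Lemma~\ref{lem:inv-manif}. A secondary point is confirming that $\bg$ leaves $\bE(\M)$ invariant. This holds because of the construction $\bg=\bE\circ\bff\circ\bD$ together with $\bff(\M)=\M$.
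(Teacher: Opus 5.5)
Your proposal is correct and follows the paper's route. The Whitney embedding makes $\bE|_{\M}$ and $\bD|_{\bE(\M)}$ mutually inverse (the paper calls them diffeomorphisms; you rightly note homeomorphisms suffice), the construction $\bg = \bE\circ\bff\circ\bD$ on $\bE(\M)$ yields the conjugacy $\tilde\bff^{(j)} = \tilde\bD\circ\tilde\bg^{(j)}\circ\tilde\bE$, and continuity of $\bD$ carries the region of attraction across, with the only difference being that you write out the neighborhood argument for Lyapunov stability that the paper compresses into ``local asymptotic stability is preserved under diffeomorphism.''
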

\begin{remark}
    Here, stability is with respect to open subsets of $\bE(\M)$ and $\M$, not $\R^{n_z}$ and $\R^{n_x}$.
\end{remark}
Thus, stability in the latent representation of the invariant manifold translates to stability in the full-order representation of the invariant manifold. Practically---as we demonstrate below---stability on the invariant manifold can be concluded by exhibiting an asymptotic Lyapunov function for $\bg: \R^{n_z} \to \R^{n_z}$; such a Lyapunov function automatically restricts to an asymptotic Lyapunov function on $\bE(\M)$, which provides the conditions for Theorem \ref{thm:transf-stb}.

To make a stronger conclusion regarding stability for the full-order dynamics $\bff: \R^{n_x} \to \R^{n_x}$ on a neighborhood of $\bx^*$ in $\R^{n_x}$, rather than stability on a neighborhood in $\M$, one may appeal to the literature on attractive invariant manifolds \cite{wiggins1994normally}. Informally, stability on the invariant manifold plus sufficiently fast convergence normal to the invariant manifold implies stability of the full system. Within the scope of this work, such convergence remains an assumption.

\section{Methods}\label{sec:methods}
\begin{figure}[t]
    \centering
    \includegraphics[width=0.9\linewidth]{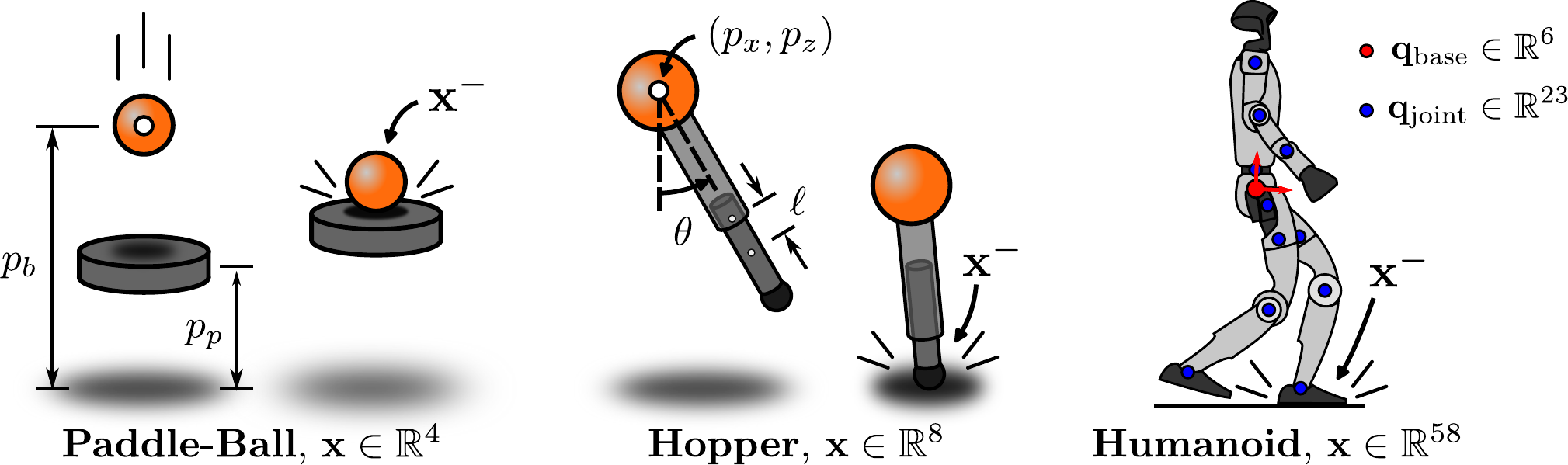}
    \caption{The three systems of study, their configurations, and their impacts.}
    \label{fig:example_systems}
    \vspace{-10pt}
\end{figure}
We recall that the stability of a hybrid system can be characterized by its impact-to-impact dynamics. We consider a closed-loop hybrid system with state $\bx$ and feedback controller $\mathbf{k}(\bx)$, learned through reinforcement learning (RL). To study stability of this system, we extract sequences of pre-impact states,
\begin{equation}
    \mathfrak{D} = \{\{\bx_k\}_{k = 0}^K: \bx_0 \in \mathcal X_0 \subset \R^{n_x}, \; \bx_{k+1} = \bff(\bx_k)\}    
\end{equation}
where $\mathbf{f}: \R^{n_x} \rightarrow \R^{n_x} $ is the Poincar\'e map of the closed-loop system and each $\bx_k \coloneq \bx^-$ is its state just before an impact. The configurations and pre-impact states of the systems considered here---a paddle-ball, a planar hopper, and a Unitree G1 humanoid robot---are illustrated in Figure~\ref{fig:example_systems}. Using the dataset $\mathfrak D$, we train an autoencoder-based ROM that captures the Poincar\'e dynamics in a low-dimensional latent space. We then use the learned latent dynamics to predict trajectories and estimate regions of attraction of the FOM. The remainder of this section details each component of this pipeline\footnote{Full code is available at: \href{https://github.com/sesteban951/halo-latent-locomotion}{\texttt{github.com/sesteban951/halo-latent-locomotion}}.}.
\paragraph{Reinforcement Learning Controllers:}
As described above, the feedback controller $\bk(\bx)$ is an RL policy $\bpi_{\xi}(\bo)$, where the observation \(\bo\) is a function of the state for the paddle-ball and hopper, and a function of the state together with auxiliary variables for the humanoid. The paddle-ball policy is trained to regulate the ball at a desired height, whereas the hopper and humanoid policies are trained to maintain a constant forward velocity.

We train controllers for the paddle-ball and hopper using the actor-critic PPO algorithm implemented in Brax \cite{brax2021github}. The learned policy directly outputs bounded torque commands at 50 Hz. We train a locomotion controller for the G1 humanoid using the actor-critic PPO algorithm implemented in mjlab \cite{zakka2026mjlablightweightframeworkgpuaccelerated} with the \texttt{rsl\_rl} library. The learned policy outputs joint position setpoints at 50 Hz, which are tracked by joint-level PD controllers to produce motor torques. Additional implementation details are provided in Appendix~\ref{sec:rl_policy}.


%
\paragraph{Poincar\'e Data Collection:}
To collect training data, we simulate the closed-loop system in MuJoCo MJX~\cite{todorov2012mujoco} for a fixed time horizon, using the JAX backend for the paddle-ball and hopper systems and the Warp backend for the G1. At each impact, we use MuJoCo sensor measurements to record the generalized position and velocity in a contact-relative frame---in the world frame for the paddle-ball, relative to the foot position for the hopper, and in a yaw-aligned foot frame for the humanoid. After pruning faulty Poincar\'e data, such as impacts with sliding contact or contact chatter, we obtain clean sequences of Poincar\'e returns.

\begin{remark}
In practice, impacts can be estimated without dedicated contact sensors. Kinematic methods use foot height and velocity; dynamic methods use changes in generalized velocities, accelerations, or momentum;  force methods use joint torques and ground reaction force estimates.
\end{remark}
%
%
\paragraph{Autoencoder Architecture:}
We parameterize the autoencoder using three networks: an \textit{encoder} that projects the FOM state onto a lower-dimensional latent state, $\bz_k = \bE_{\bphi}(\bx_k)$, a \textit{decoder} that reconstructs the FOM state from the latent state, $\hat{\bx}_k = \bD_{\bpsi}(\bz_k)$, and a \textit{latent dynamics} network that propagates the latent state forward via learned residual ROM dynamics,
\begin{equation}
    \bz_{k+1} = \bg_{\boldsymbol\rho}(\bz_k) \coloneq \bz_k + \bar{\bg}_{\boldsymbol\rho}(\bz_k).
\end{equation}
The encoder and decoder are mirrored MLPs with swish activations, sized $[64, 32, 16]$ and $[16, 32, 64]$ for the hopper and paddle-ball systems, and $[256, 128, 64]$ and $[64, 128, 256]$ for the humanoid. The latent dynamics network is an MLP with swish activations, sized $[64, 64, 64]$ for the hopper and paddle-ball systems, and $[128, 128, 128]$ for the humanoid. The latent dimension $n_z$ is 2 for the paddle-ball, 4 for the hopper, and 12 for the humanoid. More details are provided in Section \ref{sec:autoencoder_hyperparams}.
%
\paragraph{Loss Function:} \label{sec:learning_latent_dynamics}
\begin{wrapfigure}{r}{0.45\linewidth}
    \vspace{-1pt} 
    \centering
    \includegraphics[width=0.8\linewidth]{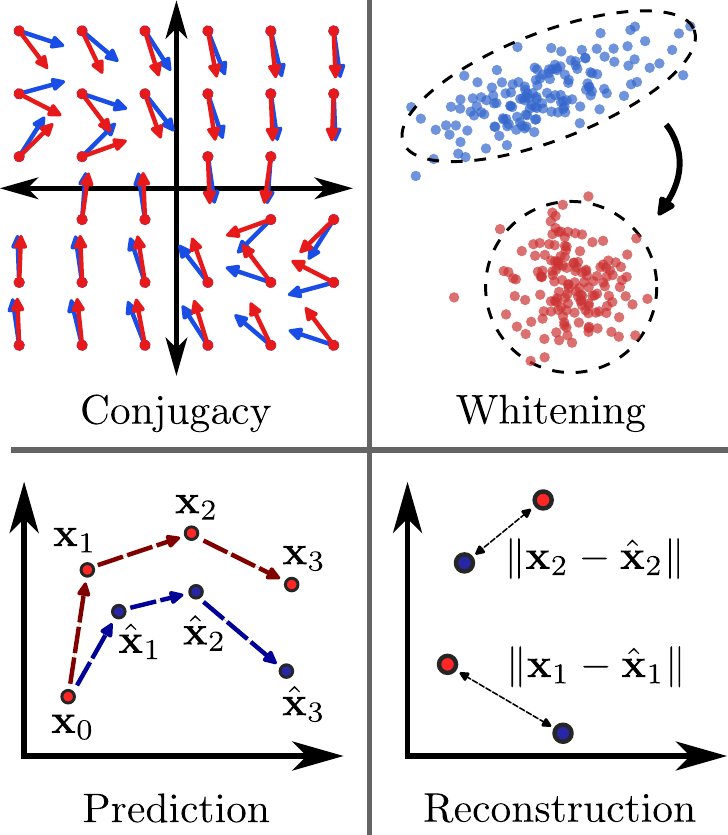}
    \caption{A visual guide of the losses.}
    \label{fig:losses_visual}
    \vspace{-10pt}
\end{wrapfigure}

Using the insights from Section \ref{sec:insights}, we seek parameters \(\boldsymbol\phi\), \(\boldsymbol\psi\), and \(\boldsymbol\rho\) such that the encoder \(\bE_{\boldsymbol\phi}\), decoder \(\bD_{\boldsymbol\psi}\), and latent dynamics model \(\bg_{\boldsymbol\rho}\) approximate the FOM Poincar\'e map:
\begin{align} \label{eq:true_loss_func}
    \inf_{\boldsymbol\phi,\boldsymbol\psi,\boldsymbol\rho}
    \sum_{\bx_k \in \mathfrak{D}}
    \left\|
    \bff(\bx_k) - (\bD_{\boldsymbol\psi}\circ \bg_{\boldsymbol\rho}\circ \bE_{\boldsymbol\phi})(\bx_k)
    \right\|^2.
\end{align}
The encoder, decoder, and latent dynamics networks are trained jointly using a weighted sum of losses, each enforcing a different aspect of reconstruction accuracy and dynamical consistency. Inspired by \cite{lutkus2025latent}, we organize these losses into reconstruction, conjugacy, prediction, whitening, and regularization terms. The loss functions are summarized in Table~\ref{tab:loss-summary} and illustrated in Figure~\ref{fig:losses_visual}.

The reconstruction losses $L_{\mathrm{rec}, x}$ and $L_{\mathrm{rec}, z}$ ensure that the encoder-decoder pair $(\mathbf{E}_{\boldsymbol{\phi}}, \mathbf{D}_{\boldsymbol{\psi}})$ accurately reconstructs states in both the full-order and latent spaces, respectively. The forward and backward conjugacy losses, $L_{\mathrm{fwd}}$ and $L_{\mathrm{bck}}$, encourage the learned latent dynamics to align with the FOM Poincar\'e dynamics. The prediction loss $L_{\mathrm{pred}}$ improves multi-step rollout accuracy, encouraging the learned ROM to capture long-horizon trajectory evolution rather than myopic one-step behavior. The whitening loss $L_{\mathrm{iso}}$ helps prevent latent space collapse by encouraging the latent covariance to remain close to identity. Finally, the weight regularization loss $L_{\mathrm{reg}}$ penalizes large network weights to reduce overfitting to data. The total loss function is given by:
\begin{equation}
    L = L_{\mathrm{rec},x}
      + L_{\mathrm{rec},z}
      + L_{\mathrm{fwd}}
      + L_{\mathrm{bck}}
      + L_{\mathrm{pred}}
      + L_{\mathrm{iso}}
      + L_{\mathrm{reg}}.
\end{equation}
All loss terms are scaled as shown in Table~\ref{tab:loss-summary}, where all weights are set to 1.0 except for the L2 regularization term, which uses $\lambda_{\mathrm{reg}} = 10^{-6}$.

\begin{table}[t]
\centering
\small
\renewcommand{\arraystretch}{1.15}
\setlength{\tabcolsep}{4pt}
\begin{tabular}{p{0.14\linewidth} p{0.58\linewidth} p{0.17\linewidth}}
\hline
\textbf{Loss} & \textbf{Definition} & \textbf{Purpose} \\
\hline
$L_{\mathrm{rec},x}(\bphi,\bpsi)$
&
$
\frac{\lambda_{\mathrm{rec},x}}{BK}
\sum_{b, k=1}^{B, K}
\left\| \bx^{(b)}_{k} - (\bD_{\bpsi}\!\circ\!\bE_{\bphi})(\bx^{(b)}_{k}) \right\|^2
$
&
Reconstruction
\\[6pt]

$L_{\mathrm{rec},z}(\bphi,\bpsi)$
&
$
\frac{\lambda_{\mathrm{rec},z}}{BK}
\sum_{b, k=1}^{B, K}
\left\| \bE_{\bphi}(\bx^{(b)}_{k}) - (\bE_{\bphi}\!\circ\!\bD_{\bpsi}\!\circ\!\bE_{\bphi})(\bx^{(b)}_{k}) \right\|^2
$
&
Reconstruction
\\[6pt]

$L_{\mathrm{fwd}}(\bphi,\boldsymbol\rho)$
&
$
\frac{\lambda_{\mathrm{fwd}}}{B(K-1)}
\sum_{b, k=1}^{B, K-1}
\left\| \bE_{\bphi}(\bx^{(b)}_{k+1}) - (\bg_{\boldsymbol\rho}\!\circ\!\bE_{\bphi})(\bx^{(b)}_{k}) \right\|^2
$
&
Conjugacy
\\[6pt]

$L_{\mathrm{bck}}(\bphi,\boldsymbol\rho,\bpsi)$
&
$
\frac{\lambda_{\mathrm{bck}}}{B(K-1)}
\sum_{b, k=1}^{B, K-1}
\left\| \bx^{(b)}_{k+1} - (\bD_{\bpsi}\!\circ\!\bg_{\boldsymbol\rho}\!\circ\!\bE_{\bphi})(\bx^{(b)}_{k}) \right\|^2
$
&
Conjugacy
\\[6pt]

$L_{\mathrm{pred}}(\bphi,\boldsymbol\rho,\bpsi)$
&
$
\frac{\lambda_{\mathrm{pred}}}{B(K-1)}
\sum_{b=1, k=2}^{B, K}
\left\| \bx^{(b)}_{k} - \big(\bD_{\bpsi}\!\circ\!\bg_{\boldsymbol\rho}^{(k-1)}\!\circ\!\bE_{\bphi}\big)(\bx^{(b)}_{1}) \right\|^2
$
&
Prediction
\\[6pt]

$L_{\mathrm{iso}}(\bphi)$
&
$
\lambda_{\mathrm{iso}}
\left\|
\bI
-
\frac{1}{BK}
\sum_{b, k=1}^{B, K}
\big(\bE_{\bphi}(\bx^{(b)}_{k})-\boldsymbol{\mu}\big)
\big(\bE_{\bphi}(\bx^{(b)}_{k})-\boldsymbol{\mu}\big)^{\!\top}
\right\|_{F}^{2}
$
&
Whitening
\\[6pt]

$L_{\mathrm{reg}}(\bphi,\boldsymbol\rho,\bpsi)$
&
$
\lambda_{\mathrm{reg}}
\sum_{w_i \in \mathbf{W}(\bphi,\bpsi,\boldsymbol\rho)}
\|w_i\|^{2}
$
&
L2 Regularization
\\
\hline
\end{tabular}
\caption{Summary of the loss terms. $\mathbf{W}$ denotes trainable weights (excluding biases), and $\bg_{\boldsymbol\rho}^{(k-1)}$ denotes repeated application of $\bg_{\boldsymbol\rho}$. In addition, $B$ denotes the mini-batch size and $K$ denotes the trajectory length.}
\label{tab:loss-summary}
\vspace{-15pt}
\end{table}
%
%
\paragraph{Training:}
We train autoencoders using the Adam optimizer~\cite{kingma2014adam} with shuffled mini-batches of trajectory data. The model is implemented in JAX and Flax \cite{jax2018github, flax2020github}. Before training, each state feature is normalized independently using training-set statistics aggregated over all trajectories and time steps. The dataset of trajectories is partitioned into disjoint training, validation, and test sets. More details are provided in Appendix~\ref{sec:autoencoder_hyperparams}.
%
\paragraph{Latent Region of Attraction:}
To approximate the region of attraction (ROA) for the FOM impact-to-impact dynamics, we linearize the latent Poincar\'e map around the latent equilibrium point\footnote{Linearization of $\bg_{\boldsymbol\rho}(\bz)$ is obtained via automatic differentiation using \texttt{jax.jacfwd}, giving $\bQ = \frac{\partial \bg_{\boldsymbol\rho}}{\partial \bz}\vert_{\bz^*}$.} $\bz^*$, yielding the discrete-time linear system $\bz_{k+1} = \bg_{\boldsymbol\rho}(\bz_k) \approx \bQ (\bz_k - \bz^*) + \bg_{\boldsymbol\rho}(\bz^*)$. Without loss of generality, we shift coordinates so that $\bz^*=\bzero$. We then construct a discrete-time Lyapunov function $ V_{\bz}(\bz) = \bz^\top \bP \bz$ where $\bP \succ \bzero$ satisfies the discrete-time Lyapunov equation $\bQ^\top \bP \bQ -\bP + \bI = \bzero$.
Next, we construct an estimated ROA around the latent equilibrium and map it through the decoder to obtain an approximation of the full-order ROA. Consider the following set:
\begin{equation}
    \mc{D} = \{\bz \in \mathbb{R}^{n_z} : \Delta V_{\bz}(\bz) \leq 0\} = \Bigl\{ \bz \in \mathbb{R}^{n_z} :
        - \bz^{\top} \bz
        + \bg_{\boldsymbol\rho}(\bz)\T\bP\bg_{\boldsymbol\rho}(\bz) - \bz\T\bQ\T\bP\bQ\bz        \le 0 \Bigr\}.
\end{equation}
This is the set of points for which the latent candidate Lyapunov function $V_{\bz}$ does not increase under the nonlinear latent dynamics.
\noindent To obtain the largest Lyapunov sublevel set contained in $\mc{D}$, we solve:
\begin{equation}\label{eq:roa-nlp}
    c^\ast = \inf_{\bz \in \partial \mc{D}} \bz^\top \bP \bz
\end{equation}
using a Monte Carlo sampling strategy\footnote{Direct optimization of $c^*$ is challenging as $\partial \mathcal D$ is non-convex and implicitly determined by a neural network. We thus use a simple but effective sampling-based sweep to obtain a conservative ROA estimate.}. The sublevel set $\boldsymbol{\Omega}_{c^*} \coloneq \{\bz : \bz^{\top} \bP \bz \leq c^*\}$ is taken as the latent ROA estimate. 
We then sample points in $\boldsymbol{\Omega}_{c^\ast}$, decode them through $\mathbf{D}_{\boldsymbol\psi}$ to obtain full-order states, and simulate these initial conditions under the FOM dynamics to assess if they remain stable.
%


\section{Results}\label{sec:results}
\begin{wrapfigure}{r}{0.45\linewidth}
    \centering
    \vspace{-25pt}
    \includegraphics[width=\linewidth]{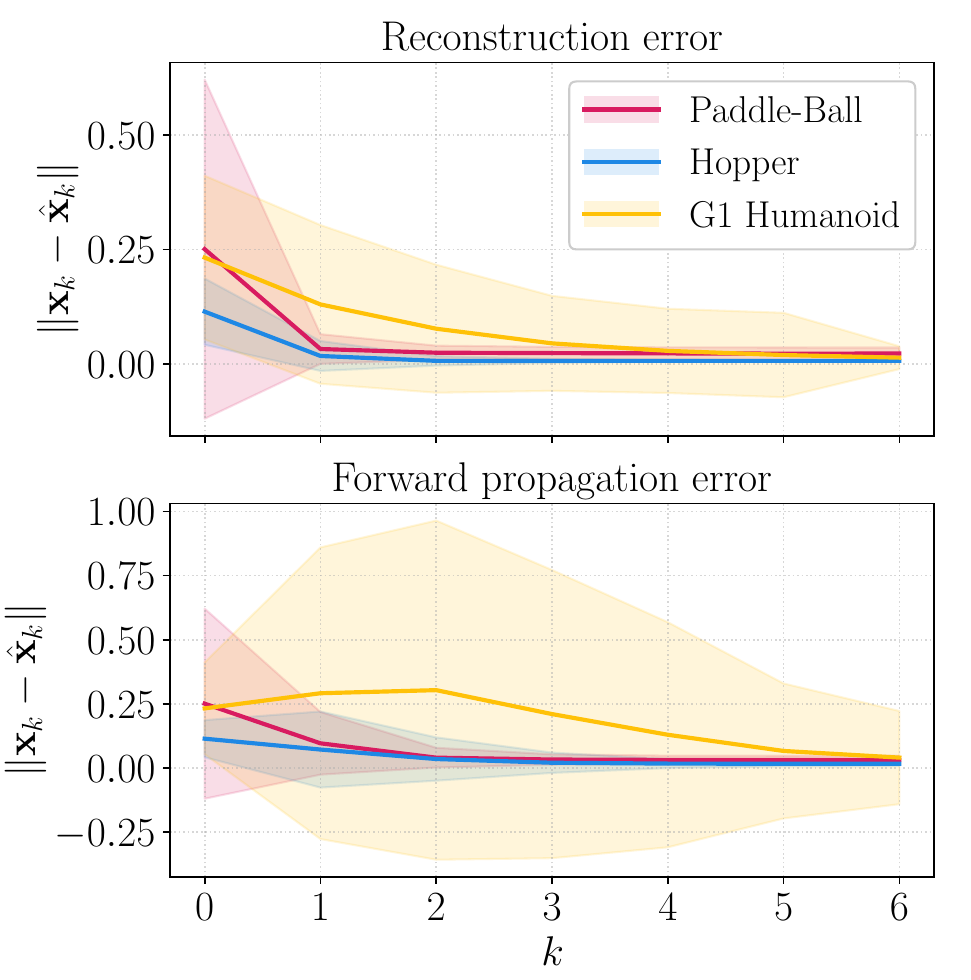}
    \vspace{-18pt}
    \caption{Per-step normalized error of the learned ROM over 3,000 test trajectories for each system. (Top) shows single-step reconstruction error, and (bottom) shows multi-step latent forward-propagation error. Solid curves denote the mean and shaded bands denote $\pm \sigma$.}
    \label{fig:traj_errors}
\end{wrapfigure}

To validate the capabilities of our autoencoder, we test both its prediction and stability estimation capabilities. To test the encoding and decoding accuracy of the autoencoder, we gather normalized Poincar\'{e} data of $N$ full-order rollouts, each of length $K$ for each of our systems. We send this data through the encoder followed by the decoder and calculate 
\begin{align}
    \ell_{\mathrm{rec}} =\lVert \bx_k-\mathbf{D}_{\boldsymbol\psi} \circ \bE_{\bphi}(\bx_k) \rVert.
\end{align} 
For each state in the trajectory, we take the mean and standard deviation of $\ell_{\mathrm{rec}}$ at each Poincar\'{e} step $k$. Second, to test the accuracy of the latent dynamics, we take the initial states $\bx_0$ for each of our rollouts, encode the state, perform $K$ steps of the latent dynamics, then decode the latent trajectory, calculating the reconstruction loss as 
\begin{align}
    \ell_{\mathrm{dyn}}= \lVert \bx_k - \mathbf{D}_{\boldsymbol\psi}\circ\bg_{\boldsymbol\rho}^{(k)}\circ\bE_{\bphi}(\bx_0)\rVert.
\end{align}
We also take the mean and standard deviation over the Poincar\'{e} steps. We observe good performance of the reconstruction and latent dynamics, improving over the rollout as the system stabilizes. We notice that error propagation is minimal despite the length of the rollout, which we attribute to our loss choice penalizing discrepancy in long-horizon prediction, combined with the stability of the systems. 


To validate our ROA estimate, we perform the procedure described in Sec. \ref{sec:methods}: we generate a Lyapunov-based ROA estimate by solving \eqref{eq:roa-nlp}, sampling 2,000 points in $\boldsymbol\Omega_{c^*}$, and mapping them back to the full-order state space. For comparison, we compute an approximation of the ROA using a naive sampling-based method. Here, we grid the latent space with 4 points per dimension in $n_z$ dimensions, gridding a hypercube of side length equal to the major axis length of the Lyapunov region, decoding these points using the decoder. Next, we roll out all of these samples in parallel simulation, recording those that are stable, characterized by $p_z > p_z^{\mathrm{thresh}}$ not falling below a prescribed threshold over the course of the 150-step trajectory. We observe that the initial conditions in the Lyapunov-based latent ROA consistently have a near-perfect $99.9 \pm 0.1\%$ stability rate, while points from the naive hypercube sampling method have a consistently lower stability rate, for instance $75.6\ \pm 10.1\% $ on the G1. In Figures \ref{fig:FullOrderTransferG1} and \ref{fig:FullOrderTransferHopper}, we plot the stable points from each of these categories. We find that, while the Lyapunov estimate is more conservative, it is far more accurate than the naive sampling-based estimate of the ROA. In Figure \ref{fig:hardware_walk}, we demonstrate the prediction performance of the ROM relative to real-world Unitree G1 experiments.
\begin{figure}[h]
\centering
\includegraphics[width=0.7\linewidth]{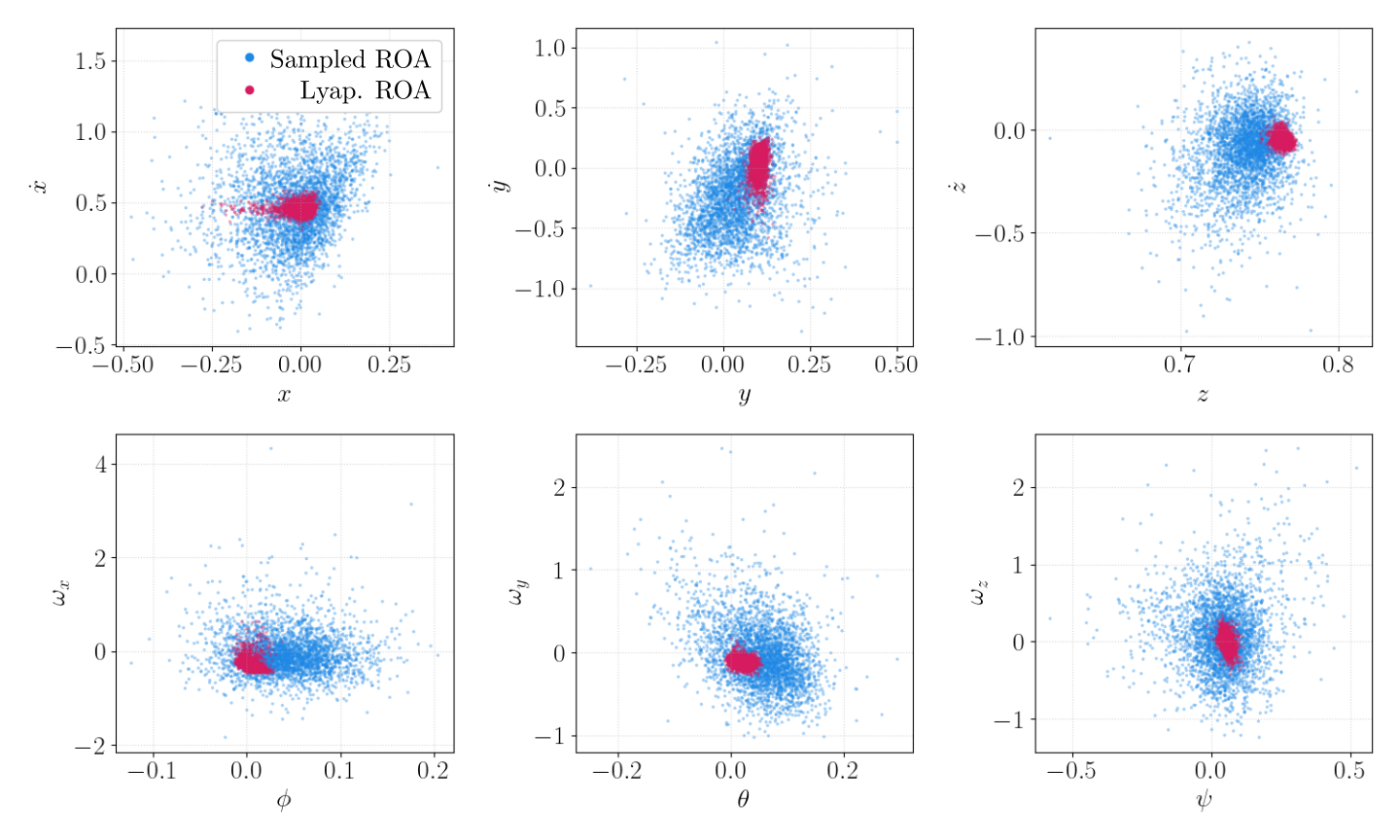}
\vspace{-10pt}
\caption{Decoded visualization of the boundary of $\bOmega_{\bz}$ in full-order coordinates of the G1. Plotted states correspond to the center-of-mass position relative to the stance foot.}
\label{fig:FullOrderTransferG1}
\end{figure}

\begin{figure}[h]
\centering
\vspace{-10pt}
\includegraphics[width=0.85\linewidth]{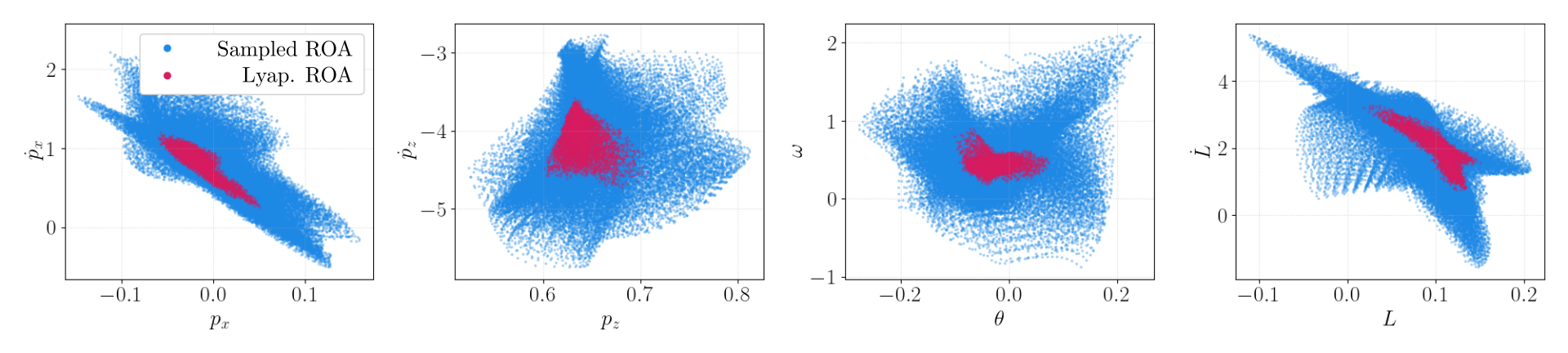}
\vspace{-10pt}

\caption{Decoded visualization of the boundary of $\bOmega_{\bz}$ in full-order coordinates of the hopper. Plotted states correspond to the center-of-mass position relative to the hopper foot.}
\label{fig:FullOrderTransferHopper}
\end{figure}

\begin{figure}[h]
\centering
\vspace{-10pt}
\includegraphics[width=0.76\linewidth]{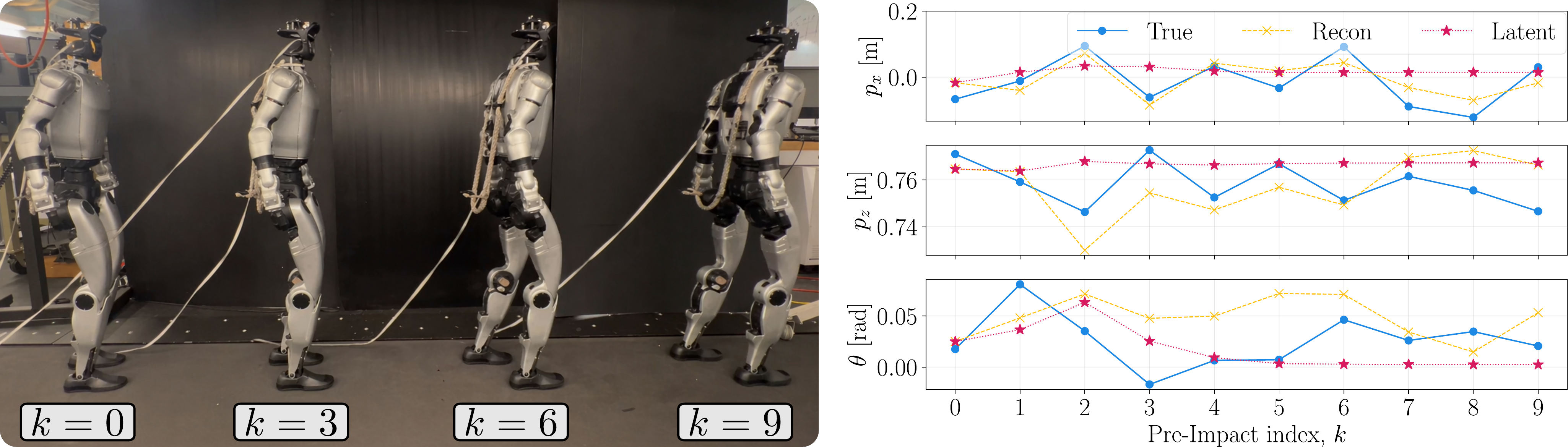}
\vspace{-5pt}
\caption{A 9-step rollout with true hardware, encoded-decoded, and latent ROM trajectories.}
\vspace{-10pt}
\label{fig:hardware_walk}
\end{figure}

\paragraph{Conclusion:}
In this work, we proposed a method of generating reduced-order models using autoencoders to represent and study the stability of hybrid systems. Our technique enables accurate reconstruction of the original dynamical system from a lower-dimensional latent space, as well as estimation of the region of attraction of the original system. This method was validated on three systems of increasing complexity: a paddle-ball, a hopper, and a Unitree G1 humanoid robot.

\clearpage

\clearpage

\acks{We would like to thank Paul Lutkus,  Kaiyuan Wang, and Professor Steven Tu for their insights and guidance. This research is supported by the Technology Innovation Institute.}


\bibliography{References/References}

\clearpage
\appendix
\section{Full-Order Model  Hybrid Dynamics}\label{sec:fom_hybrid_dynamics}
Legged robots that make and break contact with the environment can be modeled as nonlinear dynamical systems with impulse effects, which in this work is referred to as the \textit{full-order model}. The generalized configuration for these systems is $\bq \in \mc{Q} \subseteq \R^{n}$ with state $\bx = (\bq, \dot{\bq}) \in \mathsf{T} \mc{Q} \subseteq \R^{2n} = \R^{n_x}$. Using Euler-Lagrange equations, we define the continuous dynamics:
\begin{align} 
    \bM(\bq) \ddot{\bq} + \bH(\bq, \dot{\bq}) &= \bB \bu + \bJ^{\top}_c(\bq) \boldsymbol{\lambda}, \label{eq:euler_lagrange_dyn} \\
    \bJ(\bq) \ddot{\bq} + \dot{\bJ}(\bq, \dot{\bq}) \dot{\bq} &= \bzero \label{eq:holo_cons}
\end{align}
where $\bM : \mc{Q} \rightarrow \R^{n\times n}$ is the positive definite mass-inertia matrix, $\bH : \R^{n_x} \rightarrow \R^n$ contains centrifugal, Coriolis, and gravitational terms, $\bB \in \R^{n \times n_u}$ is the actuation matrix, and $\bu \in \mc{U} \subseteq \R^{n_u}$ is the control input. Additionally, $\bJ : \mc{Q} \rightarrow \R^{n_c \times n}$ is the contact Jacobian of a \textit{holonomic constraint} describing contact of the robot with the environment with corresponding constraint wrench $\boldsymbol{\lambda} \in \R^{n_c}$. Furthermore, \eqref{eq:euler_lagrange_dyn} and \eqref{eq:holo_cons} can be combined to write the state-space dynamics representation:
\begin{equation}\label{eq:fom_continuous_dyn}
    \dot{\bx} = 
    \mathbf{f}_c(\bx, \bu) \coloneq
    \begin{bmatrix}
        \dot \bq \\
        -\mathbf{M}^{-1}(\bq) \tilde{\mathbf{H}}(\bq, \dot \bq)
    \end{bmatrix}
    +
    \begin{bmatrix}
        \mathbf{0} \\
        \mathbf{M}^{-1}(\bq) \tilde{\mathbf{B}}(\bq)
    \end{bmatrix}
    \bu,
\end{equation}
where the dynamics $\mathbf{f}_{c}: \R^{n_x} \times \R^{n_u} \rightarrow \R^{n_x}$ are continuously differentiable. 

Discrete jumps in hybrid systems occur when a foot or hand impacts with the environment. A \textit{switching surface} is designed to capture such events. Given a continuously differentiable function $s : \R^{n_x} \rightarrow \R$, let $\mc{S} \subset \R^{n_x}$ denote the switching surface as:
\begin{align}
    \mc{S} = \{\bx \in \R^{n_x} \; | \; s(\bx) = 0, \dot{s}(\bx) < 0\}.
\end{align}
The dynamics undergo discrete transitions which are represented by:
\begin{equation}\label{eq:fom_reset_map}
\bx^+ = \bm{\Delta}(\bx^-)
\end{equation}
where  $\bm{\Delta}\,:\, \mc{S} \rightarrow \R^{n_x}$ is the continuously differentiable \textit{reset map}, $\bx^{-}\in\mc{S}$ denotes the \textit{pre-impact state} just before impact and $\bx^{+}\in\R^{n_x}\setminus\mc{S}$ denotes the \textit{post-impact state} just after impact.
Combining the continuous dynamics \eqref{eq:fom_continuous_dyn} and the reset map \eqref{eq:fom_reset_map}, the full-order dynamics can be written as a hybrid control system:
\begin{equation} \label{eq:fom_HC}
        \mathcal{HC}
        =
        \left\{
        \begin{array}{ll}
              \dot {\bx} = \mathbf{f}_c(\bx, \bu) & \text{if} \quad \bx \in \R^{n_x} \backslash \mc{S}\\
              \bx^{+} = \bm{\Delta}(\bx^{-})    & \text{if} \quad \bx^{-} \in \mc{S}.
        \end{array} 
        \right. 
\end{equation}
Furthermore, given a locally Lipschitz feedback controller $\bk\,:\,\R^{n_x}\rightarrow\R^{n_u}$ we obtain the closed-loop hybrid system:
\begin{equation} \label{eq:fom_H}
        \mathcal{H}
        =
        \left\{
        \begin{array}{ll}
              \dot {\bx} = \mathbf{f}_c(\bx, \bk(\bx)) & \text{if} \quad \bx \in \R^{n_x} \backslash \mc{S}\\
              \bx^{+} = \bm{\Delta}(\bx^{-})    & \text{if} \quad \bx^{-} \in \mc{S}.
        \end{array} 
        \right. 
\end{equation}
Let the flow of these dynamics be denoted by $\bm{\varphi}_{t}(\bx)$, which returns the state reached at time $t$ under the continuous dynamics \eqref{eq:fom_continuous_dyn} when starting from state $\bx\in\mc{X}$. To further characterize the discrete evolution of \eqref{eq:fom_H}, define the \textit{time-to-impact function} $T_{I}\,:\,\tilde{\mc{S}}\rightarrow\R_{>0}$ as:
\begin{equation}\label{eq:fom-time-to-impact}
    T_I(\bx) \coloneqq \inf\{t \geq 0\; | \; \boldsymbol\varphi_t( \bm\Delta(\bx)) \in \mc{S}\},
\end{equation}
where $\tilde{\mc{S}} \coloneqq \{\bx\in\mc{S}\,|\,T_{I}(\bx)\in(0,\infty) \}$, which captures the time passed between discrete transitions. The above formulation allows for casting the hybrid system \eqref{eq:fom_H} as a discrete-time system via the \textit{Poincar\'e map}:
\begin{equation}\label{eq:fom_poincare}
    \bx_{k+1} = \mathbf{f}(\bx_{k}),\, \quad \mathbf{f}(\bx) \coloneqq \bm{\varphi}_{T_{I}(\bx)}\big(\bm{\Delta}(\bx)\big),\, \quad \mathbf{f}\,:\,\tilde{\mc{S}}\rightarrow\mc{S}.
\end{equation}
In the literature, for legged robots, the Poincar\'e map $\bx_{k+1} = \mathbf{f}(\bx_k)$ is often referred to as the \textit{step-to-step dynamics} of the FOM. With an abuse of notation, denote the Poincar\'e map as $\mathbf{f}\,:\,\mc{S} \rightarrow \mc{S}$ with the understanding that $\mathbf{f}$ is a partial function (see \cite{ames2014rapidly,westervelt2003hybrid}). The Poincar\'e map provides a powerful tool for studying the stability of orbits of \eqref{eq:fom_H}. The flow of \eqref{eq:fom_H} is said to be periodic with period $T\geq0$ if there exists a state $\bx^*\in\mc{S}$ such that $\flow_T(\bm{\Delta}(\bx^*))=\bx^*$. This induces a \textit{periodic orbit}:
\begin{equation}\label{eq:hybrid-periodic-orbit}
    \mc{O} \coloneqq \{ \flow_t(\bm{\Delta}(\bx^*))\,|\,t\in[0,T],\,T=T_{I}(\bx^*)\}.
\end{equation}
Importantly, stability of $\mc{O}$ is equivalent to that of $\mathbf{f}$ as shown in the following theorem.
\begin{theorem}[\cite{westervelt2018feedback}]
    Let the closed loop dynamics $\mathbf{f}$ and reset map $\bm{\Delta}$ be continuously differentiable and $\overline{\bm{\Delta}(\mc{S})} \cap \mc{S} = \emptyset$. Then, the following statements hold:
    \begin{itemize}
        \item $\bx^{*} \in \mc{S}$ is a stable equilibrium point for $\bx_{k+1} = \mathbf{f}(\bx_{k})$ $\iff$ the orbit $\mc{O}(\bm{\Delta}(\bx^{*}))$ is stable.
        \item $\bx^{*} \in \mc{S}$ is an asymptotically stable equilibrium point for $\bx_{k+1} = \mathbf{f}(\bx_{k})$ $\iff$ the orbit $\mc{O}(\bm{\Delta}(\bx^{*}))$ is asymptotically stable.
        \item $\bx^{*} \in \mc{S}$ is an exponentially stable equilibrium point for $\bx_{k+1} = \mathbf{f}(\bx_{k})$ $\iff$ the orbit $\mc{O}(\bm{\Delta}(\bx^{*}))$ is exponentially stable.
    \end{itemize}
    \label{thm:poincare_stability}
\end{theorem}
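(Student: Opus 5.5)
The plan is to reduce the statement to the standard correspondence between a hybrid periodic orbit and a fixed point of its return map, treating the Poincaré map $\mathbf{f}=\flow_{T_I(\cdot)}\circ\bm{\Delta}$ as a composition of continuous maps. First I would establish regularity. The condition $\overline{\bm{\Delta}(\mc{S})}\cap\mc{S}=\emptyset$ ensures $T_I(\bx^*)>0$. Transversality, which is built into $\mc{S}$ through $\dot s<0$, lets me apply the implicit function theorem to $s(\flow_t(\bm{\Delta}(\bx)))=0$. This shows $T_I$ is continuously differentiable on a neighborhood of $\bx^*$ in $\mc{S}$. Consequently $\mathbf{f}$ is continuous near $\bx^*$. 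The flow map $(t,\bx)\mapsto\flow_t(\bx)$ is also continuous and locally Lipschitz on compact time intervals, with a uniform bound on $[0,T_I(\bx^*)+\epsilon]$.

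\textbf{From the map to the orbit.} Suppose $\bx^*$ is stable (respectively asymptotically or exponentially stable) for $\bx_{k+1}=\mathbf{f}(\bx_k)$. Take an initial state near $\mc{O}$. By continuity of the flow and transversality, the solution hits $\mc{S}$ near $\bx^*$ within bounded time. From then on, its distance to $\mc{O}$ over the $k$-th continuous phase is bounded by a Lipschitz constant $L$, obtained from Grönwall's inequality, times $\|\bx_k-\bx^*\|$. The stretch of time during which $T_I(\bx_k)$ and $T_I(\bx^*)$ disagree also shrinks with $\|\bx_k-\bx^*\|$. Hence $\operatorname{dist}(\bx(t),\mc{O})\le L\|\bx_k-\bx^*\|$ on each phase. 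This bound transfers stability directly and transfers convergence directly. Exponential decay $\|\bx_k-\bx^*\|\le M r^k$ becomes $\operatorname{dist}\le LMr^{t/(T_I(\bx^*)+\epsilon)}$, because the impact times grow linearly, being bounded between $T_I(\bx^*)\pm\epsilon$.

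\textbf{From the orbit to the map.} Suppose $\mc{O}$ is stable in each of the three senses. States on $\mc{S}$ near $\bx^*$ are near $\mc{O}$, so the hybrid trajectory stays near $\mc{O}$. Its subsequent crossings of $\mc{S}$ must then occur near $\mc{O}\cap\mc{S}=\{\bx^*\}$. Here I use $\overline{\bm{\Delta}(\mc{S})}\cap\mc{S}=\emptyset$ and the fact that $\mc{O}$ meets $\mc{S}$ only at $\bx^*$. A uniform-continuity argument on a compact tube then converts $\operatorname{dist}(\cdot,\mc{O})$ at impact times into $\|\bx_k-\bx^*\|$. Each rate transfers, again because the impact times are spaced roughly $T_I(\bx^*)$ apart.

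I expect the main obstacle to be the subtleties caused by the jump. Near the reset, a trajectory close to $\mc{O}$ may impact slightly earlier or later than the orbit does. As a result, pointwise-in-time comparisons fail. I would handle this by measuring the distance to the set $\mc{O}$ rather than to the time-parametrized orbit, and by using $T_I$ continuity to keep the mismatch window small. Since the result is standard \cite{westervelt2018feedback}, I would cite it for these routine details.
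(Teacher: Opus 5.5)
The paper gives no proof of its own; it cites \cite{westervelt2018feedback}. Your outline follows the standard route: regularity of $T_I$, hence of $\bff$, then both directions via continuous dependence on compact time intervals, with distances measured to the set $\mc O$. \textbf{The gap is in the regularity step that everything else rests on.}

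The implicit function theorem applied to $s(\flow_t(\bm\Delta(\bx)))=0$ at $(T_I(\bx^*),\bx^*)$ gives only a $C^1$ function $\tau$ with $\tau(\bx)$ the unique zero in a window $|t-T_I(\bx^*)|<\eta$. It does not give $\tau=T_I$. $T_I$ is a \emph{first} hitting time, and a perturbed trajectory could already have hit $\mc S$ at some $t\le T_I(\bx^*)-\eta$. The missing argument is that the compact arc $\Gamma_\eta=\{\flow_t(\bm\Delta(\bx^*)):0\le t\le T_I(\bx^*)-\eta\}$ lies at positive distance from the guard, so continuous dependence keeps nearby arcs off it. This is where a hypothesis on the reset is genuinely needed, at the $t=0$ end of $\Gamma_\eta$. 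It is not needed to get $T_I(\bx^*)>0$, which already holds because $\bx^*\in\tilde{\mc S}$.

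Moreover, $\mc S=\{s=0,\dot s<0\}$ is not closed, so $\Gamma_\eta\cap\mc S=\emptyset$ does not give positive distance. If the nominal arc grazes $\{s=0,\dot s=0\}$, or if $\bm\Delta(\bx^*)\in\overline{\mc S}$, then states arbitrarily close to $\mc O$ can impact prematurely, and $T_I$ and $\bff$ are discontinuous at $\bx^*$. You need $\mc O\cap\overline{\mc S}=\{\bx^*\}$. Transversality at $\bx^*$, which is all that ``$\dot s<0$ on $\mc S$'' provides, is not enough. Your orbit-to-map step rests on the same fact: ``crossings near $\mc O$ occur near $\bx^*$'' requires $\operatorname{dist}(\mc S,\mc O\setminus B_\delta(\bx^*))>0$, not merely $\mc O\cap\mc S=\{\bx^*\}$.

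Two further steps fail as written.

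First, with $\bx^*\in\mc O$ (the paper takes $t\in[0,T]$), states just past the guard are arbitrarily close to $\mc O$; an example is $\flow_{T+h}(\bm\Delta(\bx^*))$, which has $s<0$. Since $\dot s<0$ on a neighborhood of $\bx^*$, their flow keeps decreasing $s$ there and cannot reach $\mc S$ near $\bx^*$. So your claim in the map-to-orbit direction, that every state near $\mc O$ ``hits $\mc S$ near $\bx^*$ within bounded time,'' is false. You must either take the orbit as the half-open arc $t\in[0,T)$ and define stability through open neighborhoods of it, which need not contain a ball about $\bx^*$, or restrict the state space to $\{s\ge 0\}$.

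Second, in the exponential case of the orbit-to-map direction, uniform continuity only gives a modulus of continuity, not a rate. You need the linear bound $\norm{\bx-\bx^*}\le C\operatorname{dist}(\bx,\mc O)$ for $\bx\in\mc S$ near $\bx^*$. This follows from $\mc O$ crossing the guard at a nonzero angle at $\bx^*$.
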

Thus, establishing stability of the Poincar\'e map \eqref{eq:fom_poincare} certifies stability of the full hybrid system \eqref{eq:fom_H}.
%
%
\section{Proofs}\label{appendix:proofs}
In this section, we provide proofs of the theoretical results in the paper. For convenience, we restate all results below.

\subsection{Proof of Lemma \ref{lem:one-step}}
\begin{lemma*}
    Let $\bA \in \R^{n_{x}\times n_x}$ and $n_z \in (0, \rank(\bA)]_{\mathbb Z}$. Consider:
    \begin{align}
        \inf_{\bE \in \R^{n_z \times n_x}, \bD \in \R^{n_x \times n_z}, \bQ \in \R^{n_z \times n_z}} \sup_{\bx_0 \in \R^{n_x}, \norm{\bx_0} = 1}\norm{\bA \bx_0 - \bD \bQ \bE \bx_0}^2.
    \end{align}
     If $\bA = \bU \bSigma \bV^\top$ is an SVD of $\bA$, then $\bD = \bU_{n_z}$, $\bQ = \bSigma_{n_z}$, $\bE = \bV_{n_z}^\top$, where $\bU_{n_z}, \bV_{n_z}$ are the first $n_z$ columns of $\bU, \bV$ and $\bSigma_{n_z} = \diag(\sigma_1, ..., \sigma_{n_z})$, are optimal solutions.
\end{lemma*}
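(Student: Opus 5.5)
The plan is to reduce the min--max problem to a low-rank approximation problem in the spectral norm and then apply the Eckart--Young(--Mirsky) theorem. For fixed $(\bE,\bD,\bQ)$, the inner supremum is exactly the squared operator norm:
\begin{align*}
\sup_{\norm{\bx_0}=1}\norm{(\bA-\bD\bQ\bE)\bx_0}^2=\norm{\bA-\bD\bQ\bE}_2^2 .
\end{align*}
So the problem becomes minimizing $\norm{\bA-\bM}_2^2$ over matrices of the form $\bM=\bD\bQ\bE$.

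The first step is to identify this feasible set. Any product $\bD\bQ\bE$ with inner dimension $n_z$ has rank at most $n_z$. Conversely, any $\bM$ with $\rank(\bM)\le n_z$ admits a factorization $\bM=\bD\bE$ with $\bD\in\R^{n_x\times n_z}$ and $\bE\in\R^{n_z\times n_x}$, for instance from a thin SVD padded with zero columns. We may then take $\bQ=\bI$. Hence the feasible set is exactly $\{\bM:\rank(\bM)\le n_z\}$, and the optimal value equals
\begin{align*}
\inf_{\rank(\bM)\le n_z}\norm{\bA-\bM}_2^2 .
\end{align*}

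The second step invokes the Eckart--Young theorem in the spectral norm \cite{calafiore2014optimization}. This infimum equals $\sigma_{n_z+1}^2$ (interpreted as $0$ if $n_z=\rank(\bA)$) and is attained by the truncated SVD $\bA_{n_z}=\bU_{n_z}\bSigma_{n_z}\bV_{n_z}^\top$. Since the infimum is attained, it is a minimum.

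Finally, we verify the proposed triple. With $\bD=\bU_{n_z}$, $\bQ=\bSigma_{n_z}$, $\bE=\bV_{n_z}^\top$, we get $\bD\bQ\bE=\bA_{n_z}$ and
\begin{align*}
\bA-\bA_{n_z}=\sum_{i>n_z}\sigma_i\,\bu_i\bv_i^\top ,
\end{align*}
whose spectral norm is $\sigma_{n_z+1}$. So the triple achieves the optimal value and is optimal. The hypothesis $n_z\le\rank(\bA)$ only ensures $\sigma_1,\dots,\sigma_{n_z}>0$, so that $\bSigma_{n_z}$ is well defined as stated; it is not otherwise needed.

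The main obstacle is the equality of the feasible set with the rank-$\le n_z$ matrices, specifically the converse factorization direction. It is routine but must be stated so that the reduced infimum over $\bM$ truly equals the original infimum over $(\bE,\bD,\bQ)$.
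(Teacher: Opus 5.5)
Your proposal is correct and follows essentially the same route as the paper. You reduce the inner supremum to $\norm{\bA-\bD\bQ\bE}_2^2$, lower-bound it via the spectral-norm Eckart--Young problem, and show the truncated-SVD triple attains the bound; the converse factorization step you flag as the main obstacle is unnecessary, because $\rank(\bD\bQ\bE)\le n_z$ plus attainment by the proposed triple already gives optimality.
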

\begin{proof}
    The maximization, $\sup_{\bx_0 \in \R^{n_x}, \norm{\bx_0} = 1} \norm{\bA \bx_0 - \bD \bQ \bE \bx_0}^2$ (for $\norm{\cdot}$ the $\ell_2$-norm) is simply the square of the induced matrix $2$-norm, $\norm{\bA - \bD\bQ\bE}_2^2$. Hence, the problem reduces to 
    \begin{align}
        \inf_{\bE, \bD, \bQ} \norm{\bA - \bD \bQ \bE}_2^2.
    \end{align}
    Since $\rank(\bD \bQ \bE) \leq n_z$ for every pair of $\bD, \bQ, \bE$, it follows that
    \begin{align}
         \inf_{\bE, \bD, \bQ} \norm{\bA - \bD \bQ \bE}_2^2 &\geq \inf_{\bM \in \R^{n_x \times n_x}} \norm{\bA - \bM}_2^2 \text{ s.t. } \rank(\bM) \leq n_z. 
    \end{align}
    The lower bound is a well-known rank-constrained minimization \cite[p. 134]{calafiore2014optimization}. For an SVD $\bA = \bU \bSigma \bV^\top$, the Eckart-Young theorem tells us that an optimal solution $\bM$ is the $n_z$-truncation of the SVD, $\bM = \bU_{n_z} \bSigma_{n_z} \bV_{n_z}^\top$. Here, $\bU_{n_z}, \bV_{n_z}$ are the first $n_z$ columns of $\bU, \bV$, respectively, and $\bSigma_{n_z} = \diag(\sigma_1, ..., \sigma_{n_z})$. Since $\bD = \bU_{n_z}, \bQ = \bSigma_{n_z}, \bE = \bV_{n_z}^\top$ are feasible for the original problem, the lower bound above implies their optimality.
\end{proof}

\subsection{Proof of Lemma \ref{lem:diag-model-red}}
\begin{lemma*}
     Suppose $\bA = \diag(\lambda_1, ..., \lambda_{n_x})$, with $|\lambda_1| \geq ... \geq |\lambda_{n_x}|$. Let $n_z \in (0, \rank(\bA)]_{\mathbb Z}$. If the unstable and center subspaces of $\bA$ together have dimension $\leq n_z$, then Problem \eqref{eq:model-red-prob} is feasible and an optimal solution is given by the $n_z$-truncation:
    \begin{align}
        \hat \bD = \begin{bmatrix}
            I_{n_z}\\
            0_{(n_x-n_z) \times n_z}
        \end{bmatrix}, \; \hat \bQ = \diag(\lambda_1, ..., \lambda_{n_z}), \; \hat \bE = \begin{bmatrix}
            I_{n_z} & 0_{n_z \times (n_x-n_z)}
        \end{bmatrix}.
    \end{align}
\end{lemma*}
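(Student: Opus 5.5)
The plan is to follow the proof of Lemma~\ref{lem:one-step}: first bound each term of the series from below using a rank argument, then observe that the $n_z$-truncation attains every one of these bounds at the same time.

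\emph{Step 1: the truncation has finite cost.} Since $\bA$ is diagonal, its eigenvalues are the $\lambda_i$. The unstable and center subspaces are spanned by the coordinate directions with $|\lambda_i|\geq 1$. By hypothesis there are at most $n_z$ such directions, and because the $|\lambda_i|$ are ordered in nonincreasing order, they all lie among the first $n_z$. Hence $|\lambda_i|<1$ for every $i>n_z$.

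For the truncation we have $\hat\bD\hat\bQ^k\hat\bE=\diag(\lambda_1^k,\dots,\lambda_{n_z}^k,0,\dots,0)$ for every $k\geq 0$, where the $k=0$ case uses the convention $\hat\bQ^0=I_{n_z}$. Therefore
\begin{align}
\norm{\bA^k-\hat\bD\hat\bQ^k\hat\bE}_F^2=\sum_{i=n_z+1}^{n_x}|\lambda_i|^{2k}.
\end{align}
Summing over $k$ gives a finite sum of geometric series, namely $\sum_{i>n_z}(1-|\lambda_i|^2)^{-1}$. So the objective in \eqref{eq:model-red-prob} is finite at $(\hat\bE,\hat\bD,\hat\bQ)$, which establishes feasibility.

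\emph{Step 2: a termwise lower bound.} Take any triple $(\bE,\bD,\bQ)$ and any $k\geq 0$. The matrix $\bD\bQ^k\bE$ has rank at most $n_z$, since it factors through $\R^{n_z}$. The matrix $\bA^k=\diag(\lambda_1^k,\dots,\lambda_{n_x}^k)$ has singular values $|\lambda_1|^k\geq\dots\geq|\lambda_{n_x}|^k$, in that order. The Frobenius-norm version of the Eckart--Young theorem, the same rank-constrained problem cited in the proof of Lemma~\ref{lem:one-step}, then gives
\begin{align}
\norm{\bA^k-\bD\bQ^k\bE}_F^2\;\geq\;\inf_{\rank(\bM)\leq n_z}\norm{\bA^k-\bM}_F^2=\sum_{i=n_z+1}^{n_x}|\lambda_i|^{2k}.
\end{align}

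\emph{Step 3: conclude optimality.} Summing the bound from Step 2 over $k$ shows that every feasible triple has cost at least $\sum_k\sum_{i>n_z}|\lambda_i|^{2k}$. By Step 1 this is exactly the cost of the truncation, so the truncation is optimal.

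The main point requiring care is that the minimizer of the rank-constrained problem does not depend on $k$. The best rank-$n_z$ approximation of $\bA^k$ keeps its $n_z$ largest singular values. Because $k\mapsto |\lambda|^k$ is monotone, the ordering of the $|\lambda_i|^k$ is the same for every $k$, so the kept singular values always correspond to the first $n_z$ coordinates. This is precisely the matrix $\hat\bD\hat\bQ^k\hat\bE$, so a single triple attains every termwise infimum at once. If some $|\lambda_i|$ are tied, the truncation is still one of the minimizers for each $k$, which is all the argument needs.

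It also helps to check the $k=0$ term explicitly. There $\bA^0=I$, the bound is $n_x-n_z$, and the truncation attains it.
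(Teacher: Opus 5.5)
Your proof is correct and follows the same structure as the paper's proof. Both bound each term $\norm{\bA^k-\bD\bQ^k\bE}_F^2$ below using the rank constraint, observe that the $n_z$-truncation attains every termwise bound at once, sum over $k$, and get feasibility from $|\lambda_j|<1$ for $j>n_z$. The one difference is in evaluating the termwise bound: you apply the Frobenius-norm Eckart--Young theorem to the singular values $|\lambda_i|^k$, while the paper argues that the rank-constrained minimizer can be taken diagonal. Your route is, if anything, the cleaner justification of that step, because zeroing off-diagonal entries does not obviously respect the rank constraint.
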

\begin{proof}
    We begin by considering the simpler one-step problem $\inf_{\bE, \bD, \bQ} \norm{\bA^k - \bD \bQ^k \bE}_F^2$. Since $\rank(\bD \bQ^k \bE) \leq n_z$, it must be that for each fixed $k \geq 0$,
    \begin{align}
        \inf_{\bE, \bD, \bQ} \norm{\bA^k - \bD \bQ^k \bE}_F^2 \geq \inf_{\bM} \norm{\bA^k - \bM}_F^2, \text{ s.t. } \rank(\bM) \leq n_z.
    \end{align}
    Now, fix $k \geq 0$. Since $\bA$ is diagonal, $\bA^k = \diag(\lambda_1^k, ..., \lambda_{n_x}^k)$. As such, any off-diagonal entry $\bM_{ij}$ raises the Frobenius norm error. This implies that we can further reduce the problem to:
    \begin{align}
        \begin{pmatrix}
            \inf_{\bM} \norm{\bA^k - \bM}_F^2\\
            \text{ s.t. } \rank(\bM) \leq n_z
        \end{pmatrix}
        =&
        \begin{pmatrix}
            \inf_{\bM} \norm{\bA^k - \bM}_F^2\\
            \text{ s.t. } \rank(\bM) \leq n_z, \; \bM \text{ diag.}
        \end{pmatrix}
    \end{align}
    Since $\bM$ is constrained to have rank $\leq n_z$, the problem amounts to choosing at most $n_z$ nonzero diagonal entries of $\bM$. Since the entries of $\bA$ are ordered by decreasing magnitude an optimal $\bM$ is $\bM = \diag(\lambda_1^k, ..., \lambda_{n_z}^k, 0_{n_x - n_z})$. Since $\bM = \hat \bD \hat \bQ^k \hat \bE$, for $\hat \bD, \hat \bQ, \hat \bE$ as in the statement of the lemma,
    \begin{align}\label{eq:lower-bound}
        \inf_{\bE, \bD, \bQ} \norm{\bA^k - \bD \bQ^k \bE}_F^2 &\geq \norm{\bA^k - \hat \bD \hat \bQ^k \hat \bE}_F^2, \; \forall k \geq 0.
    \end{align}
    If the unstable and center subspaces of $\bA$ together have dimension $\leq n_z$, for each $j > n_z$, $|\lambda_j| < 1$. Hence, one has
    \begin{align}
        \sum_{k = 0}^\infty \norm{\bA^k - \hat \bD \hat \bQ^k \hat \bE}_F^2 &= \sum_{k = 0}^\infty \sum_{j = n_z + 1}^{n_x} |\lambda_j^k|^2 =  \sum_{j = n_z + 1}^{n_x} \sum_{k = 0}^\infty |\lambda_j^k|^2 < \infty,
    \end{align}
    which confirms feasibility of the proposed solution. Applying \eqref{eq:lower-bound} for each $k$ yields
    \begin{align}
        \inf_{\bE, \bD, \bQ} \sum_{k = 0}^\infty \norm{\bA^k - \bD \bQ^k \bE}_F^2 \geq \sum_{k = 0}^\infty \norm{\bA^k - \hat \bD\hat \bQ^k \hat \bE}_F^2,
    \end{align}
    which establishes the optimality of the proposed solution.
\end{proof}

\subsection{Proof of Lemma \ref{lem:inv-manif}}
\begin{lemma*}
    Let $\bff: \R^{n_x} \to \R^{n_x}$ be a map defining a discrete-time system with smooth invariant manifold $\M \subseteq \R^{n_x}$. For $\dim(\M) = k$:
    \begin{enumerate}
        \item \underline{Perfect encoder-decoder pair on $\M$}: there exists a smooth encoder $\bE: U \subseteq \R^{n_x} \to \R^{n_z}$ and a smooth decoder $\bD: W \subseteq \R^{n_z} \to \R^{n_x}$ satisfying $\bD(\bE(\bx)) = \bx$ for all $\bx \in \M$, where $n_z \leq 2k$ and $U, W$ are open neighborhoods of $\M$ and $\bE(\M)$ respectively, 
        \item \underline{Latent system}: there exists a map $\bg: \R^{n_z} \to \R^{n_z}$ satisfying $\bff(\bx) = \bD \circ \bg \circ \bE(\bx) \; \forall \bx \in \M$.
    \end{enumerate}
\end{lemma*}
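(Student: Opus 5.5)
We plan to build the encoder-decoder pair from a Whitney embedding of $\M$, and then obtain the latent dynamics by conjugating $\bff|_{\M}$ through that embedding. Throughout, we take $\M$ to be an embedded submanifold of $\R^{n_x}$, as implicit in the statement.

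\textbf{Step 1: the embedding.} Since $\M$ is a smooth $k$-dimensional manifold, the Whitney embedding theorem \cite[Thm. 6.19]{lee2013smooth} gives a smooth embedding $e: \M \to \R^{2k}$. We set $n_z = 2k$; if a smaller latent dimension is desired, we can pad with zero coordinates. The image $e(\M)$ is then an embedded submanifold of $\R^{n_z}$, and $e: \M \to e(\M)$ is a diffeomorphism.

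\textbf{Step 2: the encoder and decoder.} To define $\bE$ on an open neighborhood $U$ of $\M$, we extend $e$ smoothly off $\M$. The plan is to use a tubular neighborhood $U$ of $\M$ with smooth retraction $r: U \to \M$ (nearest-point projection), and to set $\bE = e \circ r$. Then $\bE|_{\M} = e$ and $\bE(\M) = e(\M)$.

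For the decoder, we apply the same device in the latent space. Take a tubular neighborhood $W$ of $e(\M)$ in $\R^{n_z}$ with retraction $r': W \to e(\M)$, and define
\begin{align}
    \bD = \iota_{\M} \circ e^{-1} \circ r'.
\end{align}
This map is smooth because $e^{-1}: e(\M) \to \M$ is smooth and the inclusion $\iota_{\M}: \M \hookrightarrow \R^{n_x}$ is smooth. For $\bx \in \M$ we get
\begin{align}
    \bD(\bE(\bx)) = e^{-1}(r'(e(\bx))) = e^{-1}(e(\bx)) = \bx,
\end{align}
which is part 1.

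Tubular neighborhoods of non-compact embedded submanifolds need not have constant radius, but they do exist. If one only wants some smooth extension, an alternative is the extension lemma for closed embedded submanifolds, applied on an open set in which $\M$ is closed.

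\textbf{Step 3: the latent system.} Define $\bg$ on $\bE(\M)$ by
\begin{align}
    \bg = e \circ \bff|_{\M} \circ e^{-1}.
\end{align}
This is well defined because invariance gives $\bff(\M) = \M$, so $\bff(e^{-1}(\bz)) \in \M$ lies in the domain of $e$. Extend $\bg$ to all of $\R^{n_z}$ arbitrarily, for example by $\bz \mapsto \bg(r'(\bz))$ on $W$ and $\bz \mapsto \bz$ elsewhere; part 2 only requires $\bg$ to be a map. Then for $\bx \in \M$,
\begin{align}
    \bD(\bg(\bE(\bx))) = e^{-1}\bigl(e(\bff(\bx))\bigr) = \bff(\bx),
\end{align}
where $r'$ acts as the identity on $e(\M)$ and $\bg(e(\bx)) = e(\bff(\bx))$.

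\textbf{Main obstacle.} The main difficulty is Step 2: producing smooth extensions of $e$ and $e^{-1}$ to open neighborhoods. This is handled by tubular neighborhoods, which exist because $\M$ and $e(\M)$ are embedded submanifolds. Everything else is bookkeeping of compositions.
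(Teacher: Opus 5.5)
Your proof is correct and follows the paper's argument: a Whitney embedding, smooth extensions of the embedding and of its inverse (composed with the inclusion $\iota_{\M}$) to open neighborhoods of $\M$ and of its image, and the conjugated latent map $e\circ\bff|_{\M}\circ e^{-1}$, extended arbitrarily off the image. The only difference is how you build the extensions. You compose with tubular-neighborhood retractions, which has the small bonus that $\bD\circ\bE$ is defined on all of $U$ and is itself a smooth retraction of $U$ onto $\M$; the paper instead invokes the extension lemma \cite[Lem. 5.34]{lee2013smooth}. One aside is backwards: padding with zero coordinates increases $n_z$ rather than decreasing it, but this is harmless because $n_z=2k$ already satisfies $n_z\le 2k$.
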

\begin{proof}
    Whitney's embedding theorem guarantees the existence of a smooth embedding $\overline \bE: \M \to \R^{n_z}$, where $n_z \leq 2k$. Since $\overline \bE$ is an embedding, we may restrict its codomain to obtain a diffeomorphism $\tilde \bE: \M \to \overline \bE(\M)$ . Since $\tilde \bE$ is a diffeomorphism, it has a smooth inverse $\tilde \bD: \overline \bE(\M) \to \M$, satisfying $\tilde \bD(\tilde \bE(\bx)) = \tilde \bD(\overline \bE(\bx)) = \bx$ for all $\bx \in \M$.

    Now, we extend the (co)domains of $\overline \bE$ and $\tilde \bD$. By \cite[Lem. 5.34]{lee2013smooth}, there exists a neighborhood $U \subseteq \R^{n_x}$ of $\M$ and a smooth function $\bE: U \to \R^{n_z}$ for which $\bE|_{\M} = \overline \bE$. Since $\M$ is embedded in $\R^{n_x}$, the inclusion map $\iota_\M: \M \hookrightarrow \R^{n_x}$ taking $\bx \in \M$ to $\bx$ as an element of $\R^{n_x}$ is smooth; therefore the composition $\iota_{\M} \circ \tilde \bD : \overline \bE(\M) \to \R^{n_x}$ is smooth. Applying \cite[Lem. 5.34]{lee2013smooth} to $\iota_{\M} \circ \tilde \bD$, we conclude the existence of a neighborhood $W \subseteq \R^{n_z}$ of $\overline \bE(\M)$ and a map $\bD: W \to \R^{n_x}$ for which $\bD|_{\overline \bE(\M)} = \iota_{\M} \circ \tilde \bD$. By the extension property, it follows that $\bD(\bE(\bx)) = \bx$ for all $\bx \in \M$, which establishes (1).

    Now, we construct the dynamical system $\bz_{k+1} = \bg(\bz_k)$, satisfying $\bff(\bx) = \bD(\bg(\bE(\bx)))$ for all $\bx \in \M$. Define $\tilde \bg: \bE(\M) \to \R^{n_z}$ by $ \bz \mapsto \bE \circ \bff \circ \bD(\bz)$. If $\bz \in \bE(\M)$, then $\bD(\bz) \in \M$ by part (1), which implies that $\bE \circ \bff \circ \bD(\bz) \in \bE(\M)$. So, $\tilde \bg(\bz) = \bE \circ \bff \circ \bD(\bz)$ and $\bE \circ \bff \circ \bD(\bz) \in \bE(\M)$ implies that $\bD \circ \tilde \bg(\bz) = \bff \circ \bD(\bz)$ for all $\bz \in \bE(\M)$. We conclude that $\bD \circ \tilde \bg \circ \bE(\bx) = \bff(\bx)$ for all $\bx \in \M$. Extending $\tilde \bg$ to $\R^{n_z}$ (we don't ask for regularity in $\R^{n_z} \setminus \bE(\M)$), the result follows.
\end{proof}

\subsection{Proof of Theorem \ref{thm:transf-stb}}
\begin{theorem*}
    Consider the setting of Lemma \ref{lem:inv-manif} and its proof. Let $\tilde \bg : \bE(\M) \to \bE(\M)$ be the restriction of $\bg$ to $\bE(\M)$ and $\tilde \bff: \M \to \M$ the restriction of $\bff$ to $\M$. Suppose $\bz^* \in \bE(\M)$ is a locally asymptotically stable fixed point of $\tilde \bg$. Then, $\bx^* = \bD(\bz^*)$ is a locally asymptotically stable fixed point of $\tilde \bff$. Further, if $\bz$ belongs to the region of attraction of $\bz^*$ for $\tilde \bg$, then $\bD(\bz)$ belongs to the region of attraction of $\bx^*$ for $\tilde \bff$.
\end{theorem*}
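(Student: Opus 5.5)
The plan is to show that $\tilde \bff$ and $\tilde \bg$ are topologically conjugate through the restricted decoder. Stability and regions of attraction are preserved under conjugacy by a homeomorphism, so the statement should follow directly once the conjugacy is set up.

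\emph{Setting up the conjugacy.} From the proof of Lemma~\ref{lem:inv-manif}, $\bE|_{\M} = \overline \bE$ is an embedding. Hence $\tilde \bE: \M \to \bE(\M)$ is a diffeomorphism (in particular a homeomorphism, with $\bE(\M)$ carrying the subspace topology). Its inverse is $\tilde \bD$, and $\bD|_{\bE(\M)} = \iota_\M \circ \tilde \bD$. On $\bE(\M)$, the construction gives $\bg = \bE \circ \bff \circ \bD$. Set $h := \bD|_{\bE(\M)}: \bE(\M) \to \M$. Then $h$ is a homeomorphism with inverse $\bE|_{\M}$, and
\begin{align*}
h \circ \tilde \bg = \bD \circ \bE \circ \bff \circ \bD = \bff \circ \bD = \tilde \bff \circ h \quad \text{on } \bE(\M).
\end{align*}
The middle equality uses $\bD \circ \bE = \mathrm{id}$ on $\M$ together with $\bff(\M) = \M$. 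Iterating gives $h \circ \tilde \bg^{\,n} = \tilde \bff^{\,n} \circ h$ for all $n \ge 0$. In particular, $\tilde \bff(\bx^*) = h(\tilde \bg(\bz^*)) = h(\bz^*) = \bx^*$, so $\bx^*$ is a fixed point.

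\emph{Transferring stability.} For Lyapunov stability, fix a neighborhood $N$ of $\bx^*$ in $\M$. Since $h$ is continuous, $h^{-1}(N)$ is a neighborhood of $\bz^*$ in $\bE(\M)$. Stability of $\bz^*$ gives a neighborhood $N'$ of $\bz^*$ whose forward orbits under $\tilde \bg$ stay in $h^{-1}(N)$. Because $h$ is open, $h(N')$ is a neighborhood of $\bx^*$, and the conjugacy shows its $\tilde \bff$-orbits stay in $N$. For attractivity and the region-of-attraction claim, suppose $\tilde \bg^{\,n}(\bz) \to \bz^*$. Then $\tilde \bff^{\,n}(h(\bz)) = h(\tilde \bg^{\,n}(\bz)) \to h(\bz^*) = \bx^*$ by continuity of $h$. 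This yields both the attractive neighborhood (the image of the latent one, which is open since $h$ is a homeomorphism) and the statement that $\bD(\bz)$ lies in the region of attraction of $\bx^*$.

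\emph{Main obstacle.} The only delicate point is the topology. Stability must be read relative to $\M$ and $\bE(\M)$, as the remark after the theorem says, and we need $h$ to be a homeomorphism onto $\M$ in the subspace topology. This holds because Whitney's theorem provides an embedding, not merely an injective immersion, so the inverse of $\tilde \bE$ is continuous. I would state this explicitly. I would also check that $\tilde \bg$ maps $\bE(\M)$ into itself, which was already shown in the proof of Lemma~\ref{lem:inv-manif}.
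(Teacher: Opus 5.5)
Your proposal is correct and follows essentially the same route as the paper: both use the conjugacy $\tilde \bff^{(j)} = \tilde \bD \circ \tilde \bg^{(j)} \circ \tilde \bE$ on $\M$ to transfer the fixed point, local asymptotic stability, and (via continuity of $\tilde \bD$ in the limit) membership in the region of attraction. The only difference is that you write out the neighborhood argument for Lyapunov stability under a homeomorphism, whereas the paper simply asserts that stability is ``preserved by diffeomorphism.''
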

\begin{proof}
    In the proof of Lemma \ref{lem:inv-manif}, we demonstrated that the map $\tilde \bE: \M \to \bE(\M)$ is a diffeomorphism with inverse $\tilde \bD: \bE(\M) \to \M$. Further, we showed that $\tilde \bff = \tilde \bD \circ \tilde \bg \circ \tilde \bE$. First, we note that this conjugacy implies $\tilde \bff^{(j)} = \tilde \bD \circ \tilde \bg^{(j)} \circ \tilde \bE$ for each $j \geq 0$, where $(j)$ represents $j$ compositions (with $j = 0$ being the identity map). 
    
    This composition equality implies that $\bx^* = \bD(\bz^*)$ is a fixed point of $\tilde \bff$. As local asymptotic stability of fixed points is preserved by diffeomorphism, we conclude local asymptotic stability. Now, we consider the domain of attraction. If $\bz$ belongs to the region of attraction of $\bz^*$, then $\lim_{j \to \infty}\tilde \bg^{(j)}(\bz) = \bz^*$. From the composition rule above, $\tilde \bff^{(j)}(\tilde \bD(\bz)) = \tilde \bD \circ \tilde \bg^{(j)} \circ \tilde \bE \circ \tilde \bD(\bz) = \tilde \bD \circ \tilde \bg^{(j)}(\bz)$ for each $j \geq 0$. Passing to the limit, $\lim_{j \to \infty} \tilde \bff^{(j)}(\tilde \bD(\bz)) = \tilde \bD(\lim_{j \to \infty} \tilde \bg^{(j)}(\bz)) = \tilde \bD(\bz^*) = \bx^*$. We conclude that $\tilde \bD(\bz) = \bD(\bz)$ belongs to the region of attraction of $\bx^*$ for $\tilde \bff$.
\end{proof}
%
%
\section{Implementation Details}
This section contains implementation details spanning the entire proposed pipeline. The full code implementation can be found here: \url{https://github.com/sesteban951/halo-latent-locomotion}.
\subsection{Reinforcement Learning Parameters} \label{sec:rl_policy}
For all three systems, the RL policy serves as the feedback controller used to stabilize the closed-loop dynamics and generate trajectories for subsequent ROM learning. Table~\ref{tab:rl_details} summarizes the controller implementation details. Additionally, because the three systems differ substantially in morphology and task structure, their observation spaces are specified separately in Tables~\ref{tab:obs_paddle_ball}--\ref{tab:obs_g1}.
%
\begin{table}[h]
    \centering
    \begin{tabular}{l c c c}
        \hline
        \textbf{Property} & \textbf{Paddle-Ball} & \textbf{Hopper} & \textbf{G1 Humanoid} \\
        \hline
        Framework & Brax & Brax & mjlab\\
        Algorithm & PPO & PPO & PPO \\
        Observation dim. & 4 & 8 & 80 \\
        Action dim. & 1 & 2 & 23 \\
        Action type & Direct force & Direct torque/force & Position setpoints (PD) \\
        Policy network & $[32] \times 4$ & $[32] \times 4$ & $[512,256,128]$ \\
        Value network & $[256]\times 5$ & $[256]\times 5$ & $[512,256,128]$ \\
        Activation & Swish & Swish & ELU \\
        Control frequency & 50 Hz & 50 Hz & 50 Hz \\
        Number of envs & 2048 & 4096 & 4096 \\
        Training steps & 100M & 60M & $980$M \\
        Episode length & 10 sec. & 12 sec. & 20 sec. \\
        \hline
    \end{tabular}
    \caption{RL controller implementation details.}
    \label{tab:rl_details}
\end{table}
\begin{table}[h]
    \centering
    \begin{tabular}{l c l }
        \hline
        \textbf{Observation} & \textbf{Size} & \textbf{Purpose} \\
        \hline
        Ball position   & 1 & Track ball height for interception \\
        Paddle position & 1 & Track paddle location \\
        Ball velocity   & 1 & Predict incoming ball motion \\
        Paddle velocity & 1 & Measure paddle motion \\
        \hline
        \textbf{Total}  & \textbf{4} & \\
        \hline
    \end{tabular}
    \caption{Paddle-ball RL observations.}
    \label{tab:obs_paddle_ball}
\end{table}
\begin{table}[h]
    \centering
    \begin{tabular}{ l c l}
        \hline
        \textbf{Observation} & \textbf{Size} & \textbf{Purpose} \\
        \hline
        Body height         & 1 & Regulate vertical posture \\
        Body orientation    & 2 & Regulate torso orientation \\
        Leg position        & 1 & Measure leg configuration \\
        Horizontal velocity & 1 & Track forward motion \\
        Vertical velocity   & 1 & Track vertical motion \\
        Angular velocity    & 1 & Measure body rotation rate \\
        Leg velocity        & 1 & Measure leg motion \\
        \hline
        \textbf{Total}      & \textbf{8} & \\
        \hline
    \end{tabular}
    \caption{Hopper RL observations.}
    \label{tab:obs_hopper}
\end{table}
\begin{table}[h]
    \centering
    \begin{tabular}{ l c l }
        \hline
        \textbf{Observation} & \textbf{Size} & \textbf{Purpose} \\
        \hline
        Angular velocity   & 3  & Measure base rotational motion \\
        Projected gravity  & 3  & Regulate body orientation w.r.t. gravity \\
        Velocity command   & 3  & Specify desired locomotion velocity command \\
        Gait phase         & 2  & Encode periodic gait timing \\
        Joint positions    & 23 & Measure robot configuration \\
        Joint velocities   & 23 & Measure joint motion \\
        Last action        & 23 & Provide action history for smoothing \\
        \hline
        \textbf{Total}     & \textbf{80} & \\
        \hline
    \end{tabular}
    \caption{G1 humanoid RL observations.}
    \label{tab:obs_g1}
\end{table}
%
%
\subsection{Autoencoder Parameters} \label{sec:autoencoder_hyperparams}
We train each autoencoder using the Adam optimizer. Prior to training, the Poincaré section data is normalized to zero mean and unit variance per feature, ensuring that all state components contribute equally to the loss regardless of their physical units or magnitudes. The remaining hyperparameters, which vary across the three systems to account for differences in state dimension and dataset size, are summarized in Table~\ref{tab:ae_details}.
\begin{table}[h]
    \centering
    \begin{tabular}{l c c c}
        \hline
        \textbf{Property} & \textbf{Paddle-Ball} & \textbf{Hopper} & \textbf{G1 Humanoid} \\
        \hline
        FOM state dim.,\;$n_x$       & 4   & 8    & 59  \\
        ROM state dim.,\;$n_z$          & 2   & 4    & 12  \\
        Encoder layers,\;$\bE_{\boldsymbol\phi}$              & $[64, 32, 16]$ & $[64, 32, 16]$ & $[256, 128, 64]$ \\
        Decoder layers,\;$\bD_{\boldsymbol\psi}$              & $[16, 32, 64]$ & $[16, 32, 64]$ & $[64, 128, 256]$ \\
        Dynamics layers,\;$\bg_{\boldsymbol\rho}$             & $[64]\times3$ & $[64]\times3$ & $[128]\times3$ \\
        Activation                  & Swish & Swish & Swish \\
        Training steps              & 5000  & 15000  & 30000 \\
        Learning rate               & $10^{-3}$ & $10^{-3}$ & $10^{-3}$ \\
        Trajectory length,\;$K$       & 6   & 8    & 8   \\
        Mini-batch size,\;$B$                  & 512 & 1024 & 512 \\
        Training Trajectories  & 40960 & 81920 & 108403 \\ 
        Testing Trajectories  & 8192 & 8192 & 36230 \\ 
        Validation fraction,\;\%         & 5\% & 10\% & 17\% \\
        \hline
    \end{tabular}
    \caption{Autoencoder ROM training details. The G1 state vector is 
    59-dimensional due to the quaternion representation of base orientation 
    ($n_q = 30$, $n_v = 29$), though the underlying state space 
    is 58-dimensional.}
    \label{tab:ae_details}
\end{table}

We normalize each state feature independently using training-set statistics aggregated over all trajectories and time steps, yielding
$\bx~=(\bx-\boldsymbol\mu)/\boldsymbol\sigma$. This standardization reduces variation in scale across state components and improves the conditioning of the learning problem. Without this normalization, state components with larger magnitudes would dominate the loss and bias learning away from smaller-scale but still informative dynamics.

During training, the autoencoder operates on these normalized state trajectories. The same feature-wise normalization is applied consistently across the training, validation, and test splits so that the encoder, decoder, and latent dynamics model all learn on a common normalized representation of the FOM state. For evaluation, we reuse the same training-set statistics to scale residuals on a per-feature basis. In addition, for the plotting metric, we normalize by state dimension when computing the per-step RMS error across state components. This yields a normalized, dimensionless error measure that is more comparable across systems and state dimensions.
%
%

\end{document}